  \newlength{\defbaselineskip}
\theoremstyle{definition}
\newtheorem{definition}{Definition}
\theoremstyle{plain}
\newtheorem{lemma}{Lemma}
\newtheorem{theorem}{Theorem}
\newtheorem{corollary}{Corollary}
\newcommand{\N}{\mathbb{N}}
\newcommand{\R}{\mathbb{R}}
\newcommand{\C}{\mathbb{C}}
\newcommand{\abs}[1]{\left| #1 \right| }
\newcommand{\E}[1]{\mathbf{E}\left[ #1 \right]}
\newcommand{\norm}[1]{\left\| #1 \right\|}
\DeclareMathOperator{\1}{\mathbf{1}}
\begin{document}

\title{Gaussian Quadrature for Kernel Features}

\iftoggle{arxiv}{
  \author{
    Tri Dao$^\dagger$,
    Christopher De Sa$^\ddagger$,
    Christopher R{\'e}$^\dagger$ \\
    $\dagger$ Stanford University,
    $\ddagger$ Cornell University \\
    \texttt{trid@stanford.edu},
    \texttt{cdesa@cs.cornell.edu},
    \texttt{chrismre@cs.stanford.edu}
  }
}
{
  \author{
    Tri Dao \\
    Department of Computer Science \\
    Stanford University \\
    Stanford, CA 94305 \\
    \texttt{trid@stanford.edu}
    \And
    Christopher De Sa \\
    Department of Computer Science \\
    Cornell University \\
    Ithaca, NY 14853 \\
    \texttt{cdesa@cs.cornell.edu}
    \And
    Christopher R{\'e} \\
    Department of Computer Science \\
    Stanford University \\
    Stanford, CA 94305 \\
    \texttt{chrismre@cs.stanford.edu}
  }
}

\maketitle

\begin{abstract}
  Kernel methods have recently attracted resurgent interest, showing performance
  competitive with deep neural networks in tasks such as speech recognition.
  The random Fourier features map is a technique commonly used to scale up
  kernel machines, but employing the randomized feature map means that
  $O(\epsilon^{-2})$ samples are required to achieve an approximation error of at most
  $\epsilon$.
  We investigate some alternative schemes for constructing feature maps that are
  deterministic, rather than random, by approximating the kernel in the
  frequency domain using Gaussian quadrature.
  We show that deterministic feature maps can be constructed, for any $\gamma > 0$,
  to achieve error $\epsilon$ with $O(e^{e^\gamma} + \epsilon^{-1/\gamma})$ samples as $\epsilon$ goes to 0.
  Our method works particularly well with sparse ANOVA kernels, which are
  inspired by the convolutional layer of CNNs.
  We validate our methods on datasets in different domains, such as MNIST and
  TIMIT, showing that deterministic features are faster to generate and achieve
  accuracy comparable to the state-of-the-art kernel methods based on random
  Fourier features.
\end{abstract}

\section{Introduction}

Kernel machines are frequently used to solve a wide variety of problems in
machine learning~\cite{scholkopf2002learning}.
They have gained resurgent interest and have recently been
shown~\cite{huang2014kernel, lu_how_2014, may2016compact, lu2016comparison,
  may2017kernel} to be competitive with deep neural networks in some tasks such
as speech recognition on large datasets.
A kernel machine is one that handles input $x_1, \dots, x_n$, represented as
vectors in $\R^d$, only in terms of some \emph{kernel function} $k: \R^d \times \R^d
\rightarrow \R$ of pairs of data points $k(x_i, x_j)$.
This representation is attractive for classification problems because one can
learn non-linear decision boundaries directly on the input without having to
extract features before training a linear classifier.

One well-known downside of kernel machines is the fact that they scale poorly to
large datasets.
Naive kernel methods, which operate on the \emph{Gram matrix} $G_{i,j} = k(x_i,
x_j)$ of the data, can take a very long time to run because the Gram matrix
itself requires $O(n^2)$ space and many operations on it (e.g., the singular
value decomposition) take up to $O(n^3)$ time.
\citet{rahimi2007random} proposed a solution to this problem: approximating the
kernel with an inner product in a higher-dimensional space.
Specifically, they suggest constructing a feature map $z: \R^d \rightarrow \R^D$ such that
$k(x, y) \approx \langle z(x), z(y) \rangle$.
This approximation enables kernel machines to use scalable linear methods for
solving classification problems and to avoid the pitfalls of naive kernel methods
by not materializing the Gram matrix.

In the case of shift-invariant kernels, one technique that was proposed for
constructing the function $z$ is \emph{random Fourier features}
\cite{rahimi2007random}.
This data-independent method approximates the Fourier transform
integral~\eqref{eq:fourier_integral} of the kernel by averaging Monte-Carlo
samples, which allows for arbitrarily-good estimates of the kernel function $k$.
\citet{rahimi2007random} proved that if the feature map has dimension $D =
\tilde \Omega\left( \frac{d}{\epsilon^2} \right)$ then, with constant probability, the
approximation $\langle z(x), z(y) \rangle$ is uniformly $\epsilon$-close to the true kernel on a
bounded set.
While the random Fourier features method has proven to be effective in solving
practical problems, it comes with some caveats.
Most importantly, the accuracy guarantees are only probabilistic and there is no
way to easily compute, for a particular random sample, whether the desired
accuracy is achieved.

Our aim is to understand to what extent randomness is necessary to approximate a
kernel.
We thus propose a fundamentally different scheme for constructing the feature
map $z$.
While still approximating the kernel's Fourier transform
integral~\eqref{eq:fourier_integral} with a discrete sum, we select the sample
points and weights \emph{deterministically}.
This gets around the issue of probabilistic-only guarantees by removing the
randomness from the algorithm.
For small dimension, deterministic maps yield significantly lower error.
As the dimension increases, some random sampling may become necessary, and our
theoretical insights provide a new approach to sampling.
Moreover, for a particular class of kernels called sparse ANOVA kernels (also
known as convolutional kernels as they are similar to the convolutional layer in
CNNs) which have shown state-of-the-art performance in speech
recognition~\cite{may2017kernel}, deterministic maps require fewer samples than
random Fourier features, both in terms of the desired error and the kernel size.
We make the following contributions:
\begin{itemize}
  \setlength{\itemsep}{0pt}
  \item In Section~\ref{sec:kernels_and_quadrature}, we describe how to
  deterministically construct a feature map $z$ for the class of subgaussian
  kernels (which can approximate any kernel well) that has exponentially small
  (in $D$) approximation error.
  \item In Section~\ref{sec:sparse_anova_kernels}, for sparse ANOVA kernels,
  we show that our method produces good estimates using only $O(d)$ samples,
  whereas random Fourier features requires $O(d^3)$ samples.
  \item In Section~\ref{sec:experiments}, we validate our results
  experimentally.
  We demonstrate that, for real classification problems on MNIST and TIMIT
  datasets, our method combined with random sampling yields up to 3 times lower
  kernel approximation error.
  With sparse ANOVA kernels, our method slightly improves classification
  accuracy compared to the state-of-the-art kernel methods based on random
  Fourier features (which are already shown to match the performance of deep
  neural networks), all while speeding up the feature generation process.
\end{itemize}

\section{Related Work}
\label{sec:related_work}

Much work has been done on extracting features for kernel methods.
The random Fourier features method has been analyzed in the context of several
learning algorithms, and its generalization error has been characterized and
compared to that of other kernel-based algorithms~\cite{rahimi2009weighted}.
It has also been compared to the Nystr\"{o}m method~\cite{yang2012nystrom},
which is data-dependent and thus can sometimes outperform random Fourier
features.
Other recent work has analyzed the generalization performance of the random
Fourier features algorithm~\cite{lin2014sample}, and improved the bounds on its
maximum error~\cite{sriperumbudur2015rates,sutherland2015error}.

While we focus here on deterministic approximations to the Fourier transform
integral and compare them to Monte Carlo estimates, these are not the only two
methods available to us.
A possible middle-ground method is \emph{quasi-Monte Carlo} estimation, in which
low-discrepancy sequences, rather than the fully-random samples of Monte Carlo
estimation, are used to approximate the integral.
This approach was analyzed in \citet{yang2014quasi} and shown to achieves an
asymptotic error of $\epsilon = O\left( D^{-1} \left( \log(D) \right)^d \right)$.
While this is asymptotically better than the random Fourier features method, the
complexity of the quasi-Monte Carlo method coupled with its larger constant
factors prevents it from being strictly better than its predecessor.
Our method still requires asymptotically fewer samples as $\epsilon$ goes to 0.

Our deterministic approach here takes advantage of a long line of work on
numerical quadrature for estimating integrals.
\citet{bach2015equivalence} analyzed in detail the connection between quadrature
and random feature expansions, thus deriving bounds for the number of samples
required to achieve a given average approximation error (though they did not
present complexity results regarding maximum error nor suggested new feature
maps).
This connection allows us to leverage longstanding deterministic numerical
integration methods such as Gaussian
quadrature~\cite{gauss1815methodus,trefethen2008gauss} and sparse
grids~\cite{bungartz2004sparse}.

Unlike many other kernels used in machine learning, such as the Gaussian kernel,
the sparse ANOVA kernel allows us to encode prior information about the
relationships among the input variables into the kernel itself.
Sparse ANOVA kernels have been shown~\cite{stitson1999support} to work
well for many classification tasks, especially in structural modeling problems
that benefit from both the good generalization of a kernel machine and the
representational advantage of a sparse model~\cite{gunn2002structural}.

\section{Kernels and Quadrature}
\label{sec:kernels_and_quadrature}

We start with a brief overview of kernels.
A kernel function $k \colon \mathbb{R}^d \times \mathbb{R}^d \to \mathbb{R}$ encodes
the \emph{similarity} between pairs of examples.
In this paper, we focus on shift invariant kernels (those which satisfy $k(x, y)
= k(x - y)$, where we overload the definition of $k$ to also refer to a function
$k: \R^d \rightarrow \R$) that are positive definite and properly scaled.
A kernel is positive definite if its Gram matrix is always positive definite for
all non-trivial inputs, and it is properly-scaled if $k(x, x) = 1$ for all $x$.
In this setting, our results make use of a theorem~\cite{rudin1990fourier} that
also provides the ``key insight'' behind the random Fourier features method.

\begin{theorem}[Bochner's theorem]
  \label{thmBochner}
  A continuous shift-invariant properly-scaled kernel $k: \R^d \times \R^d \rightarrow \R$ is
  positive definite if and only if $k$ is the Fourier transform of a proper
  probability distribution.
\end{theorem}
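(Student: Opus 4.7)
The plan is to prove both directions of the biconditional separately. The forward (easy) direction is a direct computation, while the converse (positive definite implies representability as a Fourier transform) requires a functional-analytic construction of the measure.

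For the easy direction, assume $k(x) = \int_{\R^d} e^{i\omega^\top x}\, d\mu(\omega)$ for some probability measure $\mu$. Given any points $x_1,\dots,x_n \in \R^d$ and coefficients $c_1,\dots,c_n \in \C$, I would interchange the finite sum with the integral to obtain
\[
\sum_{i,j} c_i \bar c_j\, k(x_i-x_j) = \int_{\R^d} \Bigl|\sum_{i} c_i e^{i\omega^\top x_i}\Bigr|^2 d\mu(\omega) \geq 0,
\]
which establishes positive definiteness. The proper scaling $k(0) = \mu(\R^d) = 1$ is immediate. Continuity follows from dominated convergence because $|e^{i\omega^\top x}| = 1$ and $\mu$ is finite.

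For the converse, the natural candidate for $\mu$ is the inverse Fourier transform of $k$, but $k$ need not be integrable, so this must be made rigorous. The cleanest route is via the Riesz--Markov representation theorem. I would define a linear functional $L$ on Schwartz functions by
\[
L(\phi) = \int_{\R^d} k(x)\, \hat\phi(x)\, dx,
\]
where $\hat\phi$ is the Fourier transform of $\phi$, and show that $L$ is a positive functional: if $\phi \geq 0$ pointwise then $L(\phi) \geq 0$. Once positivity is established, Riesz--Markov produces a finite Borel measure $\mu$ with $L(\phi) = \int \phi\, d\mu$ for all test $\phi$, and Fourier inversion (applied to Schwartz functions) then identifies $k$ as the Fourier transform of $\mu$. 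Evaluating at $x=0$ gives $\mu(\R^d) = k(0) = 1$, so $\mu$ is a probability measure.

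The main obstacle is showing that $L$ is positive. The point is to convert the discrete, quadratic positive-definiteness condition on $k$ into a statement about integrals. I would approach this by Gaussian regularization: replace $k$ by $k_\varepsilon(x) := k(x)\, e^{-\varepsilon\|x\|^2}$, which is integrable and, by a convolution argument, remains positive definite (since it is the product of two positive definite functions). Then $k_\varepsilon$ has a continuous inverse Fourier transform $p_\varepsilon$, and one can show $p_\varepsilon(\omega) \geq 0$ by exhibiting it as the limit of Riemann sums of the form $\frac{1}{N^d}\sum_{i,j} e^{-i\omega^\top(x_i-x_j)} k_\varepsilon(x_i - x_j)$ on a grid, each of which is nonnegative by positive definiteness. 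Letting $\mu_\varepsilon$ be the measure with density $p_\varepsilon$, the family $\{\mu_\varepsilon\}$ is tight (since $\mu_\varepsilon(\R^d) = k_\varepsilon(0) \to 1$) and converges weakly along a subsequence to a probability measure $\mu$ whose Fourier transform is $k$ by continuity of $k$ at $0$ and Lévy's continuity theorem.
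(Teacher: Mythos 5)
The paper does not prove this statement: Bochner's theorem is quoted as a classical result and attributed to \citet{rudin1990fourier}, so there is no in-paper proof to compare against. Your sketch is essentially the standard textbook proof, and it is sound in outline. The forward direction (interchange the finite sum with the integral to get $\int \bigl| \sum_i c_i e^{i\omega^\top x_i} \bigr|^2 d\mu \ge 0$, plus dominated convergence for continuity) is complete as written. For the converse, the Gaussian-regularization route is the right one and is self-contained; the Riesz--Markov branch you open first is redundant, and its key step (positivity of $L$ on nonnegative Schwartz functions) is not easier than the regularization argument --- a nonnegative Schwartz function need not be the squared modulus of a test function, so you would end up doing the same work anyway. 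I would drop that branch rather than leave it dangling.

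Two points in the regularization argument need tightening. First, before you can write $\mu_\varepsilon(\R^d) = k_\varepsilon(0)$ you must know that $p_\varepsilon$ is integrable so that Fourier inversion applies; the standard fix is to integrate $p_\varepsilon$ against $e^{-\delta\|\omega\|^2}$, bound the result by $\sup|k_\varepsilon| \le k(0) = 1$ uniformly in $\delta$, and let $\delta \to 0$ by monotone convergence using $p_\varepsilon \ge 0$. Second, constancy of the total mass ($\mu_\varepsilon(\R^d) = 1$ for every $\varepsilon$, not merely in the limit) does not by itself give tightness --- mass can still escape to infinity. What rescues you is exactly L\'evy's continuity theorem, which you do invoke: since $\hat\mu_\varepsilon = k_\varepsilon \to k$ pointwise and $k$ is continuous at $0$, the family is tight and converges weakly to a probability measure with characteristic function $k$. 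So state L\'evy's theorem as the source of tightness rather than the mass identity. With those repairs the argument is the classical proof and is correct.
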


We can then write $k$ in terms of its Fourier transform $\Lambda$ (which is a proper
probability distribution):
\begin{equation}
  k(x - y) = \int_{\R^d} \Lambda(\omega) \exp(j \omega^\top (x - y)) \, d \omega.
  \label{eq:fourier_integral}
\end{equation}
For $\omega$ distributed according to $\Lambda$, this is equivalent to writing
\begin{equation*}
  k(x - y) = \E{\exp(j \omega^\top (x - y))} = \E{ \langle \exp(j \omega^\top x), \exp(j \omega^\top y) \rangle },
\end{equation*}
where we use the usual Hermitian inner product $\langle x, y \rangle = \sum_{i} x_i
\overline{y_i}$.
The random Fourier features method proceeds by estimating this expected value
using Monte Carlo sampling averaged across $D$ random selections of $\omega$.
Equivalently, we can think of this as approximating~\eqref{eq:fourier_integral}
with a discrete sum at randomly selected sample points.

Our objective is to choose some points $\omega_i$ and weights $a_i$ to uniformly
approximate the integral~\eqref{eq:fourier_integral} with $\tilde{k}(x - y) =
\sum_{i=1}^D a_i \exp(j \omega_j^\top(x - y))$.
To obtain a \emph{feature map} $z: \R^d \rightarrow \C^D$ where $\tilde{k} (x - y) =
\sum_{i=1}^D a_i z_i(x) \overline{z_i(y)}$, we can define
\begin{equation*}
  z(x) = \begin{bmatrix} \sqrt{a_1} \exp(j \omega_1^\top x) & \dots & \sqrt{a_D} \exp(j
    \omega_D^\top x) \end{bmatrix}^\top.
\end{equation*}
We aim to bound the maximum error for $x, y$ in a region $\mathcal{M}$ with
diameter $M = \sup_{x, y \in \mathcal{M}} \norm{x - y}$:
\begin{equation}
  \epsilon = \sup_{(x, y) \in \mathcal{M}} \abs{k(x - y) - \tilde{k}(x - y)}
   = \sup_{\norm{u} \le M} \abs{\int_{\R^d} \Lambda(\omega) e^{j \omega^\top u} \, d \omega - \sum_{i=1}^D a_i
    e^{j \omega_i^\top u}}.
  \label{eqnMaximumError}
\end{equation}

A \emph{quadrature rule} is a choice of $\omega_i$ and $a_i$ to minimize this maximum
error.
To evaluate a quadrature rule, we are concerned with the \emph{sample
  complexity} (for a fixed diameter $M$).
\begin{definition}
  \label{defnSampleComplexity}
  For any $\epsilon > 0$, a quadrature rule has sample complexity $D_{\mathrm{SC}}(\epsilon) =
  D$, where $D$ is the smallest value such that the rule, when instantiated with
  $D$ samples, has maximum error at most $\epsilon$.
\end{definition}

We will now examine ways to construct deterministic quadrature rules and their
sample complexities.

\subsection{Gaussian Quadrature}

Gaussian quadrature is one of the most popular techniques in one-dimensional
numerical integration.
The main idea is to approximate integrals of the form $\int \Lambda(\omega) f(\omega) \, d \omega \approx
\sum_{i=1}^D a_i f(\omega_i)$ such that the approximation is exact for all polynomials
below a certain degree; $D$ points are sufficient for polynomials of degree up
to $2D - 1$.
While the points and weights used by Gaussian quadrature depend both on the
distribution $\Lambda$ and the parameter $D$, they can be computed efficiently using
orthogonal polynomials~\cite{hale2013fast,townsend2015fast}.
Gaussian quadrature produces accurate results when integrating functions that
are well-approximated by polynomials, which include all subgaussian densities.
\begin{definition}[Subgaussian Distribution]
  We say that a distribution $\Lambda: \R^d \rightarrow \R$ is \emph{subgaussian} with parameter
  $b$ if for $X \sim \Lambda$ and for all $t \in \R^d$, $\E{\exp(\langle t, X \rangle)} \le \exp\left(
    \frac{1}{2} b^2 \norm{t}^2 \right)$.
\end{definition}
We subsequently assume that the distribution $\Lambda$ is subgaussian, which is a
technical restriction compared to random Fourier features.
Many of the kernels encountered in practice have subgaussian spectra, including
the ubiquitous Gaussian kernel.
More importantly, we can approximate any kernel by convolving it with the
Gaussian kernel, resulting in a subgaussian kernel.
The approximation error can be made much smaller than the inherent noise in the
data generation process.

\subsection{Polynomially-Exact Rules}
\label{ssPolyExactRules}

Since Gaussian quadrature is so successful in one dimension, as commonly done in
the numerical analysis literature \cite{isaacson1994analysis}, we might consider
using quadrature rules that are multidimensional analogues of Gaussian
quadrature --- rules that are accurate for all polynomials up to a certain degree
$R$.
In higher dimensions, this is equivalent to saying that our quadrature rule
satisfies
\begin{equation}
  \int_{\R^d} \Lambda(\omega) \prod_{l=1}^d (e_l^\top \omega)^{r_l} \, d \omega
  = \sum_{i=1}^D a_i \prod_{l=1}^d (e_l^\top \omega_i)^{r_l}
  \quad \text{for all } r \in \N^d \text{ such that } \sum_l r_l \leq R,
  \label{eqnPolyExactConstraints}
\end{equation}
where $e_l$ are the standard basis vectors.
% In addition to these linear constraints, in order for a quadrature rule of this
% type to achieve low error, it is typically necessary to ensure that the weights
% all satisfy $a_i \ge 0$.  This condition is naturally satisfied by a host of
% common polynomially-exact one-dimensional quadrature rules, including Gaussian
% Quadrature and Clenshaw-Curtis~\cite{clenshaw1960method}.

To test the accuracy of polynomially-exact quadrature, we constructed a feature
map for a Gaussian kernel, $\Lambda(\omega) = (2 \pi)^{-\frac{d}{2}} \exp\left( - \frac{1}{2}
  \norm{\omega}^2 \right)$, in $d = 25$ dimensions with $D = 1000$ and accurate for
all polynomials up to degree $R = 2$.
In Figure~\ref{fig:polynomially_exact}, we compared this to a random Fourier
features rule with the same number of samples, over a range of region diameters
$M$ that captures most of the data points in practice (as the kernel is properly
scaled).
For small regions in particular, a polynomially-exact scheme can have a
significantly lower error than a random Fourier feature map.

\begin{figure}[t]
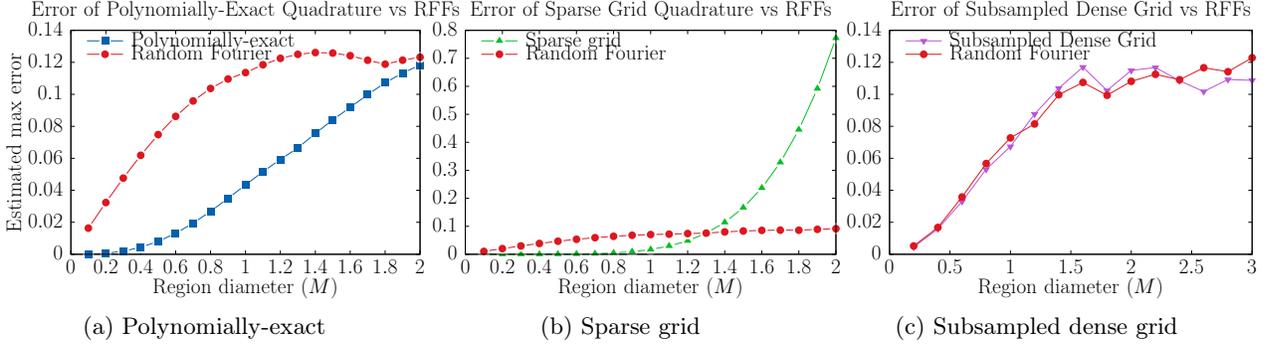

\centering
  \begin{subfigure}{0.33\linewidth}
    \centering
    \resizebox{!}{0.75\textwidth}{
    \LARGE \input{plotmaxerr.tex}
    }
    \caption{Polynomially-exact}
    \label{fig:polynomially_exact}
  \end{subfigure}\hfill%
  \begin{subfigure}{0.33\linewidth}
    \centering
    \resizebox{!}{0.75\textwidth}{
    \LARGE \input{plotmaxerrsg.tex}
    }
    \caption{Sparse grid}
    \label{fig:sparse_grid}
  \end{subfigure}\hfill%
  \begin{subfigure}{0.33\linewidth}
    \centering
    \resizebox{!}{0.75\textwidth}{
    \LARGE \input{plotmaxerrsubsamp.tex}
    }
    \caption{Subsampled dense grid}
    \label{fig:subsampled_dense_grid}
  \end{subfigure}\hfill%
  \caption{Error comparison (empirical maximum over $10^6$ uniformly-distributed
    samples) of different quadrature schemes and the random Fourier features
    method.}
\label{fig:error_comparison}
\end{figure}

This experiment motivates us to investigate theoretical bounds on the behavior
of this method.
For subgaussian kernels, it is straightforward to bound the maximum error of a
polynomially-exact feature map using the Taylor series approximation of the
exponential function in~\eqref{eqnMaximumError}.
\begin{theorem}
\label{thmPolynomialQuadrature}
Let $k$ be a kernel with $b$-subgaussian spectrum, and let $\tilde{k}$ be its
estimation under some quadrature rule with non-negative weights that is exact up
to some even degree $R$.
Let $\mathcal{M} \subset \R^d$ be some region of diameter $M$.
Then, for all $x, y \in \mathcal{M}$, the error of the quadrature features
approximation is bounded by
\begin{equation*}
  \abs{k(x - y) - \tilde{k}(x - y)} \le 3 \left(\frac{e b^2 M^2}{R}
  \right)^{\frac{R}{2}}.
\end{equation*}
\end{theorem}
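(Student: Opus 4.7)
The strategy is to introduce a Taylor polynomial at which the quadrature rule is already exact, so the error reduces to controlling the Taylor remainder on both the continuous and discrete sides. Concretely, let $P_{R-1}(t)=\sum_{n=0}^{R-1}(jt)^n/n!$ be the degree-$(R-1)$ Taylor polynomial of $e^{jt}$ at $0$. For fixed $u=x-y$, the function $\omega\mapsto P_{R-1}(\omega^\top u)$ is a polynomial in $\omega$ of total degree $R-1\le R$, so the assumed polynomial exactness gives
\begin{equation*}
  \int_{\R^d}\Lambda(\omega)P_{R-1}(\omega^\top u)\,d\omega \;=\; \sum_{i=1}^D a_i P_{R-1}(\omega_i^\top u).
\end{equation*}
Subtracting this identity from the definition of $k-\tilde k$ writes the error as a difference of Taylor remainders and lets me apply a remainder bound pointwise.

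\noindent Next I would invoke the standard estimate $|e^{jt}-P_{R-1}(t)|\le |t|^R/R!$ (since $|(e^{jt})^{(R)}|=1$) and the triangle inequality, using non-negativity of the weights $a_i$, to get
\begin{equation*}
  \abs{k(u)-\tilde k(u)}\;\le\; \frac{1}{R!}\left(\int_{\R^d}\Lambda(\omega)|\omega^\top u|^R\,d\omega \;+\; \sum_{i=1}^D a_i\,|\omega_i^\top u|^R\right).
\end{equation*}
Here the hypothesis that $R$ is \emph{even} is crucial: it makes $|\omega^\top u|^R=(\omega^\top u)^R$ an honest polynomial of degree $R$ in $\omega$, so polynomial exactness collapses the discrete sum to the integral, yielding
\begin{equation*}
  \abs{k(u)-\tilde k(u)}\;\le\; \frac{2}{R!}\int_{\R^d}\Lambda(\omega)(\omega^\top u)^R\,d\omega.
\end{equation*}

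\noindent The remaining work is to bound this moment using the subgaussian hypothesis on $\Lambda$. For $\omega\sim\Lambda$ and fixed $u$ with $\norm{u}\le M$, the scalar $\omega^\top u$ is $(bM)$-subgaussian, so an integration-by-parts on the tail bound $\Pr(|\omega^\top u|>s)\le 2e^{-s^2/(2b^2M^2)}$ gives a moment bound of the form $\E{(\omega^\top u)^R}\le 2(2b^2M^2)^{R/2}\Gamma(R/2+1)$. The last step is a Stirling-type manipulation: combine with $R!=\Gamma(R+1)$ and simplify $\frac{2\cdot 2(2b^2M^2)^{R/2}\Gamma(R/2+1)}{\Gamma(R+1)}$ using $\Gamma(R+1)\ge (R/e)^R\sqrt{2\pi R}$ and $\Gamma(R/2+1)\le (R/(2e))^{R/2}\sqrt{\pi R}\cdot e$, which collapses to the clean form $3(eb^2M^2/R)^{R/2}$ after absorbing lower-order factors into the leading constant $3$.

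\noindent I expect the only real obstacle to be the last paragraph: squeezing the subgaussian moment bound and the two factorials into exactly the claimed constant $3$ with the clean $eb^2M^2/R$ inside. The structural argument (Taylor cancellation plus polynomial exactness at the even degree $R$) is forced and short; all delicacy lives in the Stirling bookkeeping, where one has to be careful that the $e^{R/2}$ coming from $\Gamma(R/2+1)/R!$ lines up with the $e$ inside $(eb^2M^2/R)^{R/2}$ and that the leftover $\sqrt R$ factors are dominated by the slack in the constant $3$.
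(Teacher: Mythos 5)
Your proposal is correct and follows essentially the same route as the paper's proof: Taylor-expand $e^{jt}$ to degree $R-1$, cancel the polynomial part by exactness, use the evenness of $R$ together with $a_i \ge 0$ to collapse the discrete sum onto the integral (giving the factor $2/R!$ times the $R$-th moment), and finish with a subgaussian moment bound plus Stirling. The only minor difference is that you bound $\int \Lambda(\omega)(\omega^\top u)^R\,d\omega$ directly as the $R$-th moment of the scalar $(bM)$-subgaussian variable $\omega^\top u$, whereas the paper first applies Cauchy--Schwarz and then its vector-norm moment lemma; both yield the identical quantity $4\,(2b^2M^2)^{R/2}(R/2)!/R!$, and, as you anticipated, the Stirling bookkeeping is tight enough that one must retain the second-order correction terms (the paper uses $e^{1/(12n)}$ and $e^{-1/(12n+1)}$ at $R=2$) to land under the constant $3$ --- your quoted bound $\Gamma(R/2+1)\le (R/(2e))^{R/2}\sqrt{\pi R}\cdot e$ is too loose for that, but the standard sharp form closes the gap.
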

All the proofs are found in the Appendix.

To bound the sample complexity of polynomially-exact quadrature, we need to
determine how many quadrature samples we will need to satisfy the conditions of
Theorem~\ref{thmPolynomialQuadrature}.
There are $\binom{d + R}{d}$ constraints in~\eqref{eqnPolyExactConstraints}, so
a series of polynomially-exact quadrature rules that use only about this many
sample points can yield a bound on the sample complexity of this quadrature
rule.

\begin{corollary}
\label{corPolynomialQuadrature}
Assume that we are given a class of feature maps that satisfy the conditions of
Theorem~\ref{thmPolynomialQuadrature}, and that all have a number of samples $D
\le \beta {d + R \choose d}$ for some fixed constant $\beta$.
Then, for any $\gamma > 0$, the sample complexity of features maps in this class can
be bounded by
\begin{equation*}
  D(\epsilon)
  \le \beta 2^d \max\left(\exp\left(e^{2 \gamma + 1} b^2 M^2 \right), \left( \frac{3}{\epsilon}
    \right)^{\frac{1}{\gamma}} \right).
\end{equation*}
In particular, for a fixed dimension $d$, this means that for any $\gamma$, $D(\epsilon) =
O\left(\epsilon^{-\frac{1}{\gamma}} \right)$.
\end{corollary}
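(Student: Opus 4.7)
The plan is to apply Theorem~\ref{thmPolynomialQuadrature} with the polynomial degree $R$ chosen just large enough to push the error bound $3(eb^2M^2/R)^{R/2}$ below $\epsilon$, and then to convert the hypothesized sample count $\beta\binom{d+R}{d}$ into the stated $\max$-expression via the crude bound $\binom{d+R}{d}\le 2^{d+R}$.

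For the first step, I would pick $R$ to be the smallest even integer satisfying
\[
R\;\ge\;\max\bigl(e^{2\gamma+1}\,b^2 M^2,\;\gamma^{-1}\ln(3/\epsilon)\bigr).
\]
The first lower bound forces the base of the error estimate to shrink: $eb^2M^2/R\le e^{-2\gamma}$, so Theorem~\ref{thmPolynomialQuadrature} gives error at most $3\,e^{-\gamma R}$. The second lower bound then makes $3e^{-\gamma R}\le\epsilon$, meeting the accuracy target. This is the natural ``two-regime'' way to tame the combined behaviour of the base and the exponent in $(eb^2M^2/R)^{R/2}$.

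For the second step, I would start from $D\le\beta\binom{d+R}{d}\le\beta\cdot 2^d\cdot 2^R$ and verify $2^R\le\max\bigl(\exp(e^{2\gamma+1}b^2M^2),\;(3/\epsilon)^{1/\gamma}\bigr)$ case by case, according to which of the two arguments of the max defining $R$ is active. In the first case $2^R\le e^R\le\exp(e^{2\gamma+1}b^2M^2)$, using $2<e$; in the second case $2^R=2^{\ln(3/\epsilon)/\gamma}=(3/\epsilon)^{\ln 2/\gamma}\le(3/\epsilon)^{1/\gamma}$, using $\ln 2<1$. The asymptotic claim $D(\epsilon)=O(\epsilon^{-1/\gamma})$ then falls out immediately: with $d,b,M,\gamma$ held fixed the first argument of the max is a constant while the second grows without bound as $\epsilon\to 0$, so it dominates, and the $\beta 2^d$ prefactor is absorbed into the $O(\cdot)$.

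The only real annoyance in this plan is absorbing the rounding of $R$ up to the next even integer, which costs at most a factor of $4$ in $2^R$; that slack must be hidden inside the gap between $2^R$ and $\exp(R)$ in the first case, and inside the gap $(3/\epsilon)^{(1-\ln 2)/\gamma}$ in the second. Both gaps are comfortably large in the regimes where the respective branch of the max is active, and in particular in the asymptotic $\epsilon\to 0$ limit the argument is automatic, so I do not expect this to create a genuine obstacle; everything reduces to careful bookkeeping of constants around the two elementary inequalities $2<e$ and $\ln 2<1$.
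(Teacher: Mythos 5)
Your proposal is correct and follows essentially the same route as the paper's proof: bound $\binom{d+R}{d}\le 2^{d}\cdot 2^{R}$, choose $R$ so that $eb^2M^2/R\le e^{-2\gamma}$ (collapsing the error bound to $3e^{-\gamma R}\le\epsilon$), and read off the two branches of the max from the two thresholds on $R$. The only difference is cosmetic bookkeeping --- the paper passes through $2^R\le e^R$ uniformly and silently ignores the even-integer rounding of $R$ that you flag explicitly.
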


The result of this corollary implies that, in terms of the desired error $\epsilon$,
the sample complexity increases asymptotically slower than any negative power of
$\epsilon$.
Compared to the result for random Fourier features which had $D(\epsilon) = O(\epsilon^{-2})$,
this has a much weaker dependence on $\epsilon$.
While this weaker dependence does come at the cost of an additional factor of
$2^d$, it is a constant cost of operating in dimension $d$, and is not dependent
on the error $\epsilon$.

The more pressing issue, when comparing polynomially-exact features to random
Fourier features, is the fact that we have no way of efficiently constructing
quadrature rules that satisfy the conditions of
Theorem~\ref{thmPolynomialQuadrature}.
One possible construction involves selecting random sample points $\omega_i$, and
then solving~\eqref{eqnPolyExactConstraints} for the values of $a_i$ using a
non-negative least squares (NNLS) algorithm.
While this construction works in low dimensions --- it is the method we used for
the experiment in Figure~\ref{fig:polynomially_exact} --- it rapidly becomes
infeasible to solve for higher values of $d$ and $R$.

We will now show how to overcome this issue by introducing quadrature rules that
can be rapidly constructed using grid-based quadrature rules.
These rules are constructed directly from products of a one-dimensional
quadrature rule, such as Gaussian quadrature, and so avoid the
construction-difficulty problems encountered in this section.
Although grid-based quadrature rules can be constructed for any kernel
function~\cite{bungartz2004sparse}, they are easier to conceptualize when the
kernel $k$ factors along the dimensions, as $k(u) = \prod_{i=1}^d k_i(u_i)$.
For simplicity we will focus on this factorizable case.

\subsection{Dense Grid Quadrature}

The simplest way to do this is with a \emph{dense grid} (also known as tensor
product) construction.
A dense grid construction starts by factoring the integral
(\ref{eq:fourier_integral}) into $k(u) = \prod_{i=1}^d \left(\int_{-\infty}^{\infty} \Lambda_i(\omega)
  \exp(j \omega e_i^\top u) \, d \omega \right)$, where $e_i$ are the standard basis vectors.
Since each of the factors is an integral over a single dimension, we can
approximate them all with a one-dimensional quadrature rule.
In this paper, we focus on Gaussian quadrature, although we could also use other
methods such as Clenshaw-Curtis~\cite{clenshaw1960method}.
Taking tensor products of the points and weights results in the dense grid
quadrature.
The detailed construction is given in
Appendix~\ref{sec:dense_grid_construction}.

The individual Gaussian quadrature rules are exact for all polynomials up to
degree $2L - 1$, so the dense grid is also accurate for all such polynomials.
Theorem~\ref{thmPolynomialQuadrature} then yields a bound on its sample
complexity.

\begin{corollary}
\label{corDenseGridQuadrature}
Let $k$ be a kernel with a spectrum that is subgaussian with parameter $b$.
Then, for any $\gamma > 0$, the sample complexity of dense grid features can be
bounded by
\begin{equation*}
  D(\epsilon) \le \max\left(\exp\left( d e^{\gamma d} \frac{e b^2 M^2}{2} \right), \left(
      \frac{3}{\epsilon} \right)^{\frac{1}{\gamma}} \right).
\end{equation*}
In particular, as was the case with polynomially-exact features, for a fixed
$d$, $D(\epsilon) = O\left(\epsilon^{-\frac{1}{\gamma}} \right)$.
\end{corollary}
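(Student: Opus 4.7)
The plan is to exhibit the dense grid as a polynomially-exact quadrature rule and then invoke Theorem~\ref{thmPolynomialQuadrature}. Because $\Lambda$ factorizes, applying one-dimensional Gauss quadrature with $L$ nodes to each factor $\Lambda_i$ and tensoring produces a rule with $D = L^d$ nodes and strictly positive (hence non-negative) weights, since the weights of 1D Gauss quadrature are positive and tensor products preserve positivity. Since each 1D rule is exact for polynomials of degree at most $2L-1$, the product rule is exact for every multivariate polynomial of total degree at most $R := 2L - 2$, which is even. Theorem~\ref{thmPolynomialQuadrature} then bounds the maximum error by
\[
  3\left(\frac{eb^2M^2}{R}\right)^{R/2}
  \;=\; 3\left(\frac{eb^2M^2}{2(L-1)}\right)^{L-1}.
\]

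Next I would introduce the free parameter $\gamma > 0$ to simultaneously shrink the base and grow the exponent of that bound. I impose two conditions on $L$: (a) $L \ge 1 + \tfrac{eb^2M^2}{2}\, e^{\gamma d}$, which forces $eb^2M^2/R \le e^{-\gamma d}$; and (b) $L \ge 1 + \tfrac{1}{\gamma d}\log(3/\epsilon)$, which combined with (a) yields $3 e^{-\gamma d(L-1)} \le \epsilon$. Any integer $L$ satisfying both therefore drives the quadrature error below $\epsilon$, so $D(\epsilon) \le L^d$ when $L$ is the ceiling of the larger of the two right-hand sides.

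Finally, I would raise each of the two candidate values of $L$ to the $d$-th power and apply the elementary inequality $(1+x)^d \le e^{xd}$. The first candidate becomes $\exp\!\bigl(d e^{\gamma d}\, \tfrac{eb^2M^2}{2}\bigr)$ and the second becomes $\exp(\log(3/\epsilon)/\gamma) = (3/\epsilon)^{1/\gamma}$, giving exactly the stated maximum. The fixed-$d$ corollary then follows immediately, since the first term is a constant in $\epsilon$. I do not anticipate a serious obstacle: Theorem~\ref{thmPolynomialQuadrature} and the positivity of the 1D Gauss weights do essentially all the work, and the only truly deliberate design choice is to take the exponential rate to be $\gamma d$ rather than $\gamma$; this is precisely what aligns both applications of $(1+x)^d \le e^{xd}$ with the $1/\gamma$ exponent that the target bound requires.
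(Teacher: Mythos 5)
Your proposal is correct and follows essentially the same route as the paper: reduce the dense grid to a polynomially-exact rule with $R = 2L-2$ via Theorem~\ref{thmPolynomialQuadrature}, impose the two conditions $eb^2M^2/R \le e^{-\gamma d}$ and $3e^{-\gamma d(L-1)} \le \epsilon$, and bound $D = L^d$; your inequality $(1+R/2)^d \le e^{Rd/2}$ is exactly the paper's step $\log(R/2+1) \le R/2$. Both arguments share the same harmless looseness about rounding $L$ to an integer, so there is nothing to fix.
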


Unfortunately, this scheme suffers heavily from the curse of dimensionality,
since the sample complexity is doubly-exponential in $d$.
This means that, even though they are easy to compute, the dense grid method
does not represent a useful solution to the issue posed in Section
\ref{ssPolyExactRules}.

\subsection{Sparse Grid Quadrature}

The curse of dimensionality for quadrature in high dimensions has been studied
in the numerical integration setting for decades.
One of the more popular existing techniques for getting around the curse is
called \emph{sparse grid} or Smolyak quadrature~\cite{smolyak1963}, originally
developed to solve partial differential equations.
Instead of taking the tensor product of the one-dimensional quadrature rule, we
only include points up to some fixed total level $A$, thus constructing a linear
combination of dense grid quadrature rules that achieves a similar error with
exponentially fewer points than a single larger quadrature rule.
The detailed construction is given in
Appendix~\ref{sec:sparse_grid_construction}.
Compared to polynomially-exact rules, sparse grid quadrature can be computed
quickly and easily (see Algorithm 4.1 from \citet{holtz2010sparse}).

To measure the performance of sparse grid quadrature, we constructed a
feature map for the same Gaussian kernel analyzed in the previous section,
with $d = 25$ dimensions and up to level $A = 2$.
We compared this to a random Fourier features rule with the
same number of samples, $D = 1351$, and plot the results in Figure
\ref{fig:sparse_grid}.  As was the case with polynomially-exact quadrature,
this sparse grid scheme has tiny error for small-diameter regions, but this
error unfortunately increases to be even larger than that of random Fourier
features as the region diameter increases.

The sparse grid construction yields a bound on the sample count: $D \leq 3^A
\binom{d + A}{A}$, where $A$ is the bound on the total level.
By extending known bounds on the error of Gaussian quadrature, we can similarly
bound the error of the sparse grid feature method.
\begin{theorem}
\label{thmSparseGridQuadrature}
Let $k$ be a kernel with a spectrum that is subgaussian with parameter $b$, and
let $\tilde{k}$ be its estimation under the sparse grid quadrature rule up to level
$A$.
Let $\mathcal{M} \subset \R^d$ be some region of diameter $M$, and assume that $A \ge 24e
b^2 M^2$.
Then, for all $x, y \in \mathcal{M}$, the error of the quadrature features
approximation is bounded by
\begin{equation*}
  \abs{k(x - y) - \tilde{k}(x - y)} \le 2^d \left(\frac{12e b^2 M^2}{A} \right)^{A}.
\end{equation*}
\end{theorem}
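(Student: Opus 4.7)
The plan is to extend the one-dimensional Gaussian quadrature error bound from Theorem~\ref{thmPolynomialQuadrature} to the sparse grid setting via the hierarchical-difference (Smolyak) representation. Since the kernel factorizes as $k(u) = \prod_{i=1}^d k_i(u_i)$ and the sparse grid rule is a linear combination of tensor products of one-dimensional Gaussian rules, it suffices to bound the sparse grid error on the rank-one integrand $\prod_i e^{j\omega_i u_i}$ for $\|u\| \le M$. Subgaussianity of the spectrum $\Lambda$ with parameter $b$ is inherited by each marginal $\Lambda_i$, so each one-dimensional integrand is integrated against a $b$-subgaussian density, and we can control each coordinate separately.

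First, I would introduce hierarchical differences $\Delta_l = Q_l - Q_{l-1}$ (with $Q_0 = 0$) where $Q_l$ is the one-dimensional Gaussian quadrature of level $l$ with $n_l$ nodes. The exact integral satisfies $I = \sum_{l \ge 1} \Delta_l$, so the tensor product decomposes as $I^{\otimes d} = \sum_{\mathbf{l} \in \N^d} \Delta_{l_1} \otimes \cdots \otimes \Delta_{l_d}$, while Smolyak's rule up to level $A$ keeps exactly those multi-indices with $|\mathbf{l}|_1 \le A$. Consequently the error equals the tail
\begin{equation*}
k(u) - \tilde{k}(u) = \sum_{|\mathbf{l}|_1 > A} \bigl(\Delta_{l_1} \otimes \cdots \otimes \Delta_{l_d}\bigr)\bigl(\textstyle\prod_i e^{j\omega_i u_i}\bigr).
\end{equation*}

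Next, I would apply Theorem~\ref{thmPolynomialQuadrature} in one dimension to each factor: since $Q_l$ is exact up to degree $2n_l - 1$, both $Q_l$ and $Q_{l-1}$ approximate $I$ to error at most $3(eb^2M^2/(2n_l))^{n_l}$ and $3(eb^2M^2/(2n_{l-1}))^{n_{l-1}}$ respectively, so $|\Delta_l(e^{j\omega u})| \le \rho_l$ with $\rho_l$ of the same super-geometric form. The product structure then gives $|\Delta_{l_1} \otimes \cdots \otimes \Delta_{l_d}| \le \prod_i \rho_{l_i}$ on the rank-one integrand. Summing over the tail multi-indices and grouping by $s = |\mathbf{l}|_1$, there are $\binom{s-1}{d-1} \le 2^{s-1}$ multi-indices with that sum, and $\prod_i \rho_{l_i}$ is maximized (under the constraint) by a balanced choice, which after Stirling-type manipulation gives a bound of the form $(\alpha e b^2 M^2 / s)^s$ at level $s$ with some absolute constant $\alpha$.

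The main obstacle is the combinatorial tail-sum in the last step: turning the geometric product bound into the clean $2^d(12eb^2M^2/A)^A$ expression. This is where the hypothesis $A \ge 24 e b^2 M^2$ enters, ensuring the successive ratio of tail terms is at most $1/2$ so the geometric sum is controlled by twice its leading term, which absorbs the constants and upgrades the ``naive'' $e$ factor to $12$. The factor $2^d$ then arises from the coarse estimate $\binom{s-1}{d-1} \le 2^{s-1}$ together with the $d$-fold count of ways to distribute the ``overflow'' $s - A$ across coordinates. With these three ingredients — the Smolyak tail formula, the one-dimensional Gaussian quadrature error from Theorem~\ref{thmPolynomialQuadrature}, and the combinatorial tail bound under the assumption on $A$ — the claimed inequality follows.
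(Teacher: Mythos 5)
Your overall architecture matches the paper's: the Smolyak tail identity $k(u)-\tilde k(u)=\sum_{\|\mathbf m\|_1>A}\Delta_{\mathbf m}(u)$, the one-dimensional error bound from Theorem~\ref{thmPolynomialQuadrature} applied to each hierarchical difference via the triangle inequality, and a geometric tail sum controlled by the hypothesis $A\ge 24eb^2M^2$. The gap is in how you pass from the per-coordinate bounds to a bound on the product. You bound each factor by its supremum over $|u_i|\le M$ separately, obtaining $|\Delta_{l_i}|\le\rho_{l_i}\approx(6eb^2M^2/c_i)^{c_i}$ with $c_i=2^{l_i-1}$, and then assert that the maximum of $\prod_i\rho_{l_i}$ subject to the level constraint is $(\alpha\, eb^2M^2/s)^s$ for an \emph{absolute} constant $\alpha$. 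It is not: maximizing $\prod_i(K/c_i)^{c_i}$ over $\sum_i c_i=s$ is achieved at the balanced choice $c_i=s/d$ and gives $(Kd/s)^s$, so the constant is $6d$, not absolute. Taking coordinate-wise suprema implicitly allows $|u_i|=M$ in every coordinate simultaneously, i.e., $\|u\|=M\sqrt d$, and this costs a factor of $d^s$ at level $s$. Carried through, your argument needs the stronger hypothesis $A\ge 24d\,eb^2M^2$ and yields only $2^d(12d\,eb^2M^2/A)^A$, not the stated bound.

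The paper avoids this loss by \emph{not} taking per-coordinate suprema: it keeps the monomial dependence $\prod_i u_i^{2c_i}$ explicit in the bound on $|\Delta_{\mathbf m}(u)|$ and maximizes that monomial jointly over the ball $\sum_i u_i^2\le M^2$ (Lemma~\ref{lemmaZBound}, a Lagrange-multiplier computation), giving $\prod_i u_i^{2c_i}\le M^{2\|c\|_1}\|c\|_1^{-\|c\|_1}\prod_i c_i^{c_i}$. The factor $\prod_i c_i^{c_i}$ exactly cancels the $\prod_i c_i^{-c_i}$ coming from the one-dimensional quadrature errors, leaving $(6eb^2M^2/\|c\|_1)^{\|c\|_1}$ with no dimension dependence; that is the step your proposal is missing. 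A secondary slip: the number of $\mathbf m\in\N^d$ with $\|\mathbf m\|_1=s$ is $\binom{s+d-1}{d-1}$ (zero entries are allowed), not $\binom{s-1}{d-1}$, and the bound $2^{s+d-1}$ on the former is precisely where the $2^d$ in the final statement comes from.
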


This, along with our above upper bound on the sample count, yields a bound on
the sample complexity.
\begin{corollary}
\label{corSparseGridQuadrature}
Let $k$ be a kernel with a spectrum that is subgaussian with parameter $b$.
Then, for any $\gamma > 0$, the sample complexity of sparse grid features can be
bounded by
\[
  D(\epsilon) \le 2^{d} \max\left(\exp\left(24 e^{2 \gamma+1} b^2 M^2 \right), 2^{\frac{d}{\gamma}}
    \epsilon^{-\frac{1}{\gamma}} \right).
\]
As was the case with all our previous deterministic features maps, for a fixed
$d$, $D(\epsilon) = O\left(\epsilon^{-\frac{1}{\gamma}} \right)$.
\end{corollary}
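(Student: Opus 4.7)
The strategy parallels Corollary \ref{corPolynomialQuadrature}: I would pick the sparse grid level $A$ just large enough to force the error of Theorem \ref{thmSparseGridQuadrature} below $\epsilon$, then plug that $A$ into the explicit sample count bound $D \leq 3^A \binom{d+A}{A}$ quoted just above the theorem. The two competing arguments in the max of the conclusion will correspond to the two possible binding constraints on $A$ — the hypothesis $A \geq 24 e b^2 M^2$ of the theorem, and the inequality needed to make the error at most $\epsilon$.

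First I would parametrize $A$ so that the ratio appearing in the error base is bounded away from $1$. Setting $A_0 := 24 e^{2\gamma+1} b^2 M^2$ gives $12 e b^2 M^2 / A_0 = (2 e^{2\gamma})^{-1}$, which simultaneously satisfies the subgaussian hypothesis (since $e^{2\gamma+1} \geq e$ for $\gamma \geq 0$) and drives the error below $2^d (2 e^{2\gamma})^{-A}$ for every $A \geq A_0$. Demanding $2^d (2 e^{2\gamma})^{-A} \leq \epsilon$ then amounts to $A \geq A_1 := (d \log 2 + \log(1/\epsilon))/(\log 2 + 2\gamma)$, so choosing $A = \lceil\max(A_0, A_1)\rceil$ guarantees that Theorem \ref{thmSparseGridQuadrature} delivers error at most $\epsilon$.

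Next I would substitute this $A$ into $D \leq 3^A \binom{d+A}{A}$ and bound the binomial by a suitable exponential in $d+A$. In the regime $A = A_0$, the resulting $D$ is $\epsilon$-independent and scales like $2^d$ times an exponential in $A_0 = 24 e^{2\gamma+1} b^2 M^2$, giving the first argument of the max. In the regime $A = A_1$, which is linear in $\log(1/\epsilon)$ with a slope of $1/(\log 2 + 2\gamma)$, the bound on $D$ becomes $2^d$ times $(2^d/\epsilon)$ raised to an exponent proportional to $1/\gamma$, matching the form $2^{d/\gamma} \epsilon^{-1/\gamma}$. Taking the maximum over the two regimes yields the stated inequality, and the asymptotic $D(\epsilon) = O(\epsilon^{-1/\gamma})$ at fixed $d$ is immediate because only the $\epsilon$-dependent term survives as $\epsilon \to 0$.

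The main obstacle is constant-matching: getting exactly the $\exp(24 e^{2\gamma+1} b^2 M^2)$ and $2^{d/\gamma}\epsilon^{-1/\gamma}$ written in the corollary, rather than looser variants with extra factors of $\log 6$, requires a tight handling of $\binom{d+A}{A}$ — for instance splitting into $A \leq d$ and $A \geq d$ and using $\binom{d+A}{A} \leq (e(d+A)/A)^A$ in the former case and $(e(d+A)/d)^d$ in the latter — and carefully rounding the ceiling $\lceil \cdot \rceil$ so that the extra $+1$ in $A$ is absorbed into the base of the exponential. Once this bookkeeping is done, the corollary follows from a routine two-case analysis.
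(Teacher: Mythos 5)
Your overall strategy is exactly the paper's: choose the level $A$ just large enough that the hypothesis of Theorem~\ref{thmSparseGridQuadrature} holds and the resulting error bound drops below $\epsilon$, then push $A$ through the sample-count bound $D \le 3^A\binom{d+A}{A}$, with the two constraints on $A$ producing the two arguments of the max. The second branch of your argument is fine: with your base $(2e^{2\gamma})^{-1}$ the threshold $A_1$ you derive is if anything slightly smaller than the paper's, and it reproduces the $2^{d/\gamma}\epsilon^{-1/\gamma}$ term.

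The concrete problem is in the first branch, and it is not where you think it is. You set $A_0 = 24e^{2\gamma+1}b^2M^2$ so that $12eb^2M^2/A_0 = (2e^{2\gamma})^{-1}$, but then the sample count at $A = A_0$ is at least $3^{A_0} = \exp(A_0\ln 3) = \exp(24\ln 3\cdot e^{2\gamma+1}b^2M^2)$, which already exceeds the claimed $\exp(24e^{2\gamma+1}b^2M^2)$ no matter how tightly you handle $\binom{d+A}{A}$; with the natural bound $D\le 2^d e^{2A}$ you would get $\exp(48e^{2\gamma+1}b^2M^2)$. So the careful binomial splitting and ceiling bookkeeping you flag as the main obstacle cannot rescue the constant --- the lever is the choice of $A_0$ itself. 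The paper instead only requires $12eb^2M^2/A \le e^{-2\gamma}$, i.e.\ $A \ge 12e^{2\gamma+1}b^2M^2$ (which still makes the geometric base strictly less than $1$, all that Theorem~\ref{thmSparseGridQuadrature} needs), and uses the crude chain $3^A\binom{d+A}{A}\le 3^A 2^{d+A} = 2^d 6^A \le 2^d e^{2A}$; evaluating $2^d e^{2A}$ at $A = 12e^{2\gamma+1}b^2M^2$ gives exactly $2^d\exp(24e^{2\gamma+1}b^2M^2)$. With that one change your argument goes through and matches the stated bound; as written it proves the corollary only with a worse constant in the $\epsilon$-independent term (the asymptotic claim $D(\epsilon)=O(\epsilon^{-1/\gamma})$ is of course unaffected).
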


\paragraph{Subsampled grids}
One of the downsides of the dense/sparse grids analyzed above is the difficulty
of tuning the number of samples extracted in the feature map.
As the only parameter we can typically set is the degree of polynomial
exactness, even a small change in this (e.g., from 2 to 4) can produce a
significant increase in the number of features.
However, we can always subsample the grid points according to the distribution
determined by their weights to both tame the curse of dimensionality and to have
fine-grained control over the number of samples.
For simplicity, we focus on subsampling the dense grid.
In Figure~\ref{fig:subsampled_dense_grid}, we compare the empirical errors of
subsampled dense grid and random Fourier features, noting that they are
essentially the same across all diameters.

\subsection{Reweighted Grid Quadrature}
\label{sub:reweighted_quadrature}

Both random Fourier features and dense/sparse grid quadratures are
data-independent.
We now describe a data-adaptive method to choose a quadrature for a
pre-specified number of samples: reweighting the grid points to minimize the
difference between the approximate and the exact kernel on a small subset of
data.
Adjusting the grid to the data distribution yields better kernel approximation.

We approximate the kernel $k(x - y)$ with
\begin{equation*}
  \tilde{k}(x - y) = \sum_{i=1}^D a_i \exp(j \omega_i^\top (x - y)) = \sum_{i=1}^{D} a_i \cos(\omega_i^\top(x - y)),
\end{equation*}
where $a_i \geq 0$, as $k$ is real-valued.
We first choose the set of potential grid points $\omega_1, \dots, \omega_D$ by sampling
from a dense grid of Gaussian quadrature points.
% Note that we can also reweight the samples from random Fourier features, but the
% sampling step already yields points of equal weights, so re-weighting does not
% have much effect.
% TODO: Check this claim.
To solve for the weights $a_1, \dots, a_D$, we independently sample $n$ pairs
$(x_1, y_1), \dots, (x_n, y_n)$ from the dataset, then minimize the empirical
mean squared error (with variable $a_1, \dots, a_D$):
\begin{equation*}
  \begin{array}{ll}
    \mbox{minimize} & \displaystyle \frac{1}{n} \sum_{l=1}^{n} \left( k(x_l - y_l) -
                        \tilde{k}(x_l - y_l) \right)^2 \\
    \mbox{subject to} & a_i \geq 0, \text{ for } i = 1, \dots, D.
  \end{array}
\end{equation*}

For appropriately defined matrix $M$ and vector $b$, this is an NNLS problem of
minimizing $\frac{1}{n} \norm{Ma - b}^2$ subject to $a \geq 0$, with variable $a \in
\mathbb{R}^D$.
The solution is often sparse, due to the active elementwise constraints $a \geq 0$.
Hence we can pick a larger set of potential grid points $\omega_1, \dots, \omega_{D'}$
(with $D' > D$) and solve the above problem to obtain a smaller set of grid
points (those with $a_j > 0$).
To get even sparser solution, we add an $\ell_1$-penalty term with parameter $\lambda \geq
0$:
\begin{equation*}
  \begin{array}{ll}
    \mbox{minimize} & \frac{1}{n} \norm{Ma - b}^2 + \lambda \1^\top a \\
    \mbox{subject to} & a_i \geq 0, \text{ for } i = 1, \dots, D'.
  \end{array}
\end{equation*}
Bisecting on $\lambda$ yields the desired number of grid points.

As this is a data-dependent quadrature, we empirically evaluate its performance
on the TIMIT dataset, which we will describe in more details in
Section~\ref{sec:experiments}.
In Figure~\ref{fig:rms_timit}, we compare the estimated root mean squared error
on the dev set of different feature generation schemes against the number of
features $D$ (mean and standard deviation over 10 runs).
Random Fourier features, Quasi-Monte Carlo (QMC) with Halton sequence, and
subsampled dense grid have very similar approximation error, while reweighted
quadrature has much lower approximation error.
Reweighted quadrature achieves 2--3 times lower error for the same number of
features and requiring 3--5 times fewer features for a fixed threshold of
approximation error compared to random Fourier features.
Moreover, reweighted features have extremely low variance, even though the
weights are adjusted based only on a very small fraction of the dataset (500
samples out of 1 million data points).

\paragraph{Faster feature generation}
Not only does grid-based quadrature yield better statistical performance to
random Fourier features, it also has some notable systems benefits.
Generating quadrature features requires a much smaller number of multiplies, as
the grid points only take on a finite set of values for all dimensions (assuming
an isotropic kernel).
For example, a Gaussian quadrature that is exact up to polynomials of degree 21
only requires 11 grid points for each dimension.
To generate the features, we multiply the input with these 11 numbers before
adding the results to form the deterministic features.
The save in multiples may be particularly significant in architectures such as
application-specific integrated circuits (ASICs).
In our experiment on the TIMIT dataset in Section~\ref{sec:experiments}, this
specialized matrix multiplication procedure (on CPU) reduces the feature
generation time in half.

\section{Sparse ANOVA Kernels}
\label{sec:sparse_anova_kernels}
% call them convolutional kernels
% bound in terms of moments of the distribution, and include subgaussian
%   distribution as a special case

One type of kernel that is commonly used in machine learning, for example in
structural modeling, is the \emph{sparse ANOVA kernels}~\cite{hofmann2008kernel,
  gunn_structural_2002}.
They are also called \emph{convolutional kernels}, as they operate similarly to
the convolutional layer in CNNs.
These kernels have achieved state-of-the-art performance on large real-world
datasets \cite{lu_how_2014, may2017kernel}, as we will see in
Section~\ref{sec:experiments}.
A kernel of this type can be written as
\[
  k(x, y) = \sum_{S \in \mathcal{S}} \prod_{i \in S} k_1(x_i - y_i),
\]
where $\mathcal{S}$ is a set of subsets of the variables in $\{1, \ldots, d\}$, and
$k_1$ is a one-dimensional kernel.
(Straightforward extensions, which we will not discuss here, include using
different one-dimensional kernels for each element of the products, and
weighting the sum.)
Sparse ANOVA kernels are used to encode sparse dependencies among the variables:
two variables are related if they appear together in some $S \in \mathcal{S}$.
These sparse dependencies are typically problem-specific: each $S$ could
correspond to a factor in the graph if we are analyzing a distribution modeled
with a factor graph.
Equivalently, we can think of the set $\mathcal{S}$ as a hypergraph, where each
$S \in \mathcal{S}$ corresponds to a hyperedge.
Using this notion, we define the \emph{rank} of an ANOVA kernel to be $r =
\max_{S \in \mathcal{S}} \abs{S}$, the \emph{degree} as $\Delta = \max_{i \in \{1, \ldots,
  d\}} \abs{\left\{ S \in \mathcal{S} \middle| i \in S \right\}}$, and the
\emph{size} of the kernel to be the number of hyperedges $m =
\abs{\mathcal{S}}$.
For sparse models, it is common for both the rank and the degree to be small,
even as the number of dimensions $d$ becomes large, so $m = O(d)$.
This is the case we focus on in this section.

It is straightforward to apply the random Fourier features method to construct
feature maps for ANOVA kernels: construct feature maps for each of the (at most
$r$-dimensional) sub-kernels $k_{S}(x - y) = \prod_{i \in S} k_1(x_i - y_i)$
individually, and then combine the results.
To achieve overall error $\epsilon$, it suffices for each of the sub-kernel feature
maps to have error $\epsilon / m$; this can be achieved by random Fourier features
using $D_S = \tilde \Omega\left(r (\epsilon m^{-1})^{-2} \right) = \tilde \Omega\left(r m^2
  \epsilon^{-2} \right)$ samples each, where the notation $\tilde \Omega$ hides the $\log
1/\epsilon$ factor.
Summed across all the $m$ sub-kernels, this means that the random Fourier
features map can achieve error $\epsilon$ with constant probability with a sample
complexity of $D(\epsilon) = \tilde \Omega\left( r m^3 \epsilon^{-2} \right)$ samples.
While it is nice to be able to tackle this problem using random features, the
cubic dependence on $m$ in this expression is undesirable: it is significantly
larger than the $D = \tilde \Omega( d \epsilon^{-2} )$ we get in the non-ANOVA case.

Can we construct a deterministic feature map that has a better error bound?
It turns out that we can.

\begin{theorem}
  \label{thmSparseANOVAKernel}
  Assume that we use polynomially-exact quadrature to construct features for
  each of the sub-kernels $k_{S}$, under the conditions of
  Theorem~\ref{thmPolynomialQuadrature}, and then combine the resulting feature
  maps to produce a feature map for the full ANOVA kernel.
  For any $\gamma > 0$, the sample complexity of this method is
  \begin{equation*}
    D(\epsilon) \le \beta m 2^r \max\left(\exp\left( e^{2\gamma+1} b^2 M^2 \right),
      (3\Delta)^{\frac{1}{\gamma}} \epsilon^{-\frac{1}{\gamma}} \right).
  \end{equation*}
\end{theorem}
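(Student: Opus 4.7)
The approach is to build a polynomially-exact quadrature feature map $\tilde{k}_S$ separately for each sub-kernel $k_S(u) = \prod_{i \in S} k_1(u_i)$ in its own $|S| \le r$ dimensions, and then concatenate these feature maps so that $\tilde{k}(u) = \sum_{S \in \mathcal{S}} \tilde{k}_S(u)$ approximates $k(u) = \sum_{S \in \mathcal{S}} k_S(u)$. The total feature count is simply $D(\epsilon) = \sum_S D_S$. To apply the machinery of the previous subsection, I would first verify that each $k_S$ meets the hypotheses of Theorem~\ref{thmPolynomialQuadrature}: its spectrum is $\prod_{i \in S} \Lambda_1(\omega_i)$, which is $b$-subgaussian on $\R^{|S|}$ because $\Lambda_1$ is $b$-subgaussian; it is properly scaled since $k_S(0) = 1$; and the diameter of the projection of $\mathcal{M}$ onto the $S$-coordinates is at most $M$.

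Next, I would apply the triangle inequality $|k(x-y) - \tilde{k}(x-y)| \le \sum_S |k_S(x-y) - \tilde{k}_S(x-y)|$ and allocate an error budget $\epsilon_S > 0$ per sub-kernel with $\sum_S \epsilon_S \le \epsilon$. Corollary~\ref{corPolynomialQuadrature} then bounds $D_S \le \beta 2^{|S|} \max\!\left(\exp(e^{2\gamma+1} b^2 M^2),\, (3/\epsilon_S)^{1/\gamma}\right) \le \beta 2^r \max\!\left(\exp(\ldots),\, (3/\epsilon_S)^{1/\gamma}\right)$, and summing over the $m$ hyperedges yields $D(\epsilon) \le \beta m 2^r \max\!\left(\exp(\ldots),\, \max_S (3/\epsilon_S)^{1/\gamma}\right)$. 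Up to the choice of $\epsilon_S$, this already has the shape of the stated bound.

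The main obstacle is obtaining the polynomial term $(3\Delta/\epsilon)^{1/\gamma}$ rather than the $(3m/\epsilon)^{1/\gamma}$ that comes out of the naive uniform allocation $\epsilon_S = \epsilon/m$. The sharper factor must exploit hypergraph sparsity---the fact that each of the $d$ coordinates lies in at most $\Delta$ hyperedges---so that the effective number of error contributions along any direction is $\Delta$ rather than $m$. I expect the argument to regroup the Taylor-expansion bound used to prove Theorem~\ref{thmPolynomialQuadrature} coordinate-by-coordinate instead of sub-kernel-by-sub-kernel, using that only $\Delta$ of the $k_S - \tilde{k}_S$ depend on any fixed variable. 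Once this refined allocation is in place, the rest is mechanical: pick the common polynomial exactness $R$ so that $3(eb^2 M^2 / R)^{R/2} \le \epsilon/\Delta$, take $R \ge e^{2\gamma+1} b^2 M^2$ to enter the exponential regime of Corollary~\ref{corPolynomialQuadrature}, and bound $\binom{r + R}{r} \le 2^{r+R}$ exactly as in its proof to finish.
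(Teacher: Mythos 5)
Your overall skeleton matches the paper's: one polynomially-exact rule per sub-kernel in its $|S|\le r$ dimensions, concatenation, triangle inequality, $D \le \sum_S D_S \le \beta m \binom{r+R}{r} \le \beta m 2^{r+R}$, and the same choice of $R$ as in Corollary~\ref{corPolynomialQuadrature}. You have also correctly located the crux: the naive uniform budget $\epsilon_S = \epsilon/m$ gives $(3m/\epsilon)^{1/\gamma}$, and the degree $\Delta$ must enter somewhere. But the step that actually produces $\Delta$ is left as ``I expect the argument to regroup the Taylor expansion coordinate-by-coordinate,'' and that is not how it works, nor is it clear it could: the remainder term $(x^\top\omega)^R$ in the Taylor bound mixes coordinates, and more fundamentally, any scheme that fixes a per-sub-kernel error budget $\epsilon_S$ in advance and then takes a worst case over $(x,y)$ for each $S$ separately cannot see the degree bound, because for each individual $S$ the worst case $\norm{x_S-y_S}=M$ is attainable.

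The paper's actual mechanism is to \emph{not} take the worst case per sub-kernel. Keep the error bound from the end of the proof of Theorem~\ref{thmPolynomialQuadrature} in its point-dependent form $\abs{k_S - \tilde k_S} \le 3\left(e b^2/R\right)^{R/2}\norm{x_S-y_S}^{R}$, and since $R\ge 2$ and $\norm{x_S-y_S}\le M$, weaken it to $3\,\frac{\norm{x_S-y_S}^2}{M^2}\left(e b^2 M^2/R\right)^{R/2}$. Now sum over $S$ and use $\sum_{S\in\mathcal S}\norm{x_S-y_S}^2 = \sum_{i=1}^d \abs{\{S : i\in S\}}\,(x_i-y_i)^2 \le \Delta\norm{x-y}^2 \le \Delta M^2$. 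This is the only place $\Delta$ appears, and it gives a total error of $3\Delta\left(e b^2 M^2/R\right)^{R/2}$ with a single common $R$ and no budget allocation at all; the rest is exactly the mechanical finish you describe. So the gap is real but localized: you need the reduction from $\norm{x_S-y_S}^R$ to $\norm{x_S-y_S}^2 M^{R-2}$ followed by the coordinate-counting identity above, rather than a reworked Taylor expansion or a cleverer $\epsilon_S$.
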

Compared to the random Fourier features, this rate depends only linearly on $m$.
For fixed parameters $\beta$, $b$, $M$, $\Delta$, $r$, and for any $\gamma > 0$, we can bound
the sample complexity $D(\epsilon) = O(m \epsilon^{-\frac{1}{\gamma}})$, which is better than
random Fourier features \emph{both} in terms of the kernel size $m$ and the
desired error $\epsilon$.

\section{Experiments}
\label{sec:experiments}

To evaluate the performance of deterministic feature maps, we analyzed the
accuracy of a sparse ANOVA kernel on the MNIST digit classification
task~\cite{lecun1998gradient} and the TIMIT speech recognition
task~\cite{timit}.

\paragraph{Digit classification on MNIST}
% \label{subsec:MNIST}

This task consists of $70,000$ examples ($60,000$ in the training dataset and
$10,000$ in the test dataset) of hand-written digits which need to be
classified.
Each example is a $28 \times 28$ gray-scale image.
Clever kernel-based SVM techniques are known to achieve very low error rates
(e.g., $0.79\%$) on this problem~\cite{maji2009fast}.
We do not attempt to compare ourselves with these rates; rather, we compare
random Fourier features and subsampled dense grid features that both approximate
the same ANOVA kernel.
The ANOVA kernel we construct is designed to have a similar structure to the
first layer of a convolutional neural network~\cite{simard2003best}.
Just as a filter is run on each $5 \times 5$ square of the image, for our ANOVA
kernel, each of the sub-kernels is chosen to run on a $5 \times 5$ square of the
original image (note that there are many, $(28 - 5 + 1)^2 = 576$, such squares).
We choose the simple Gaussian kernel as our one-dimensional kernel.

Figure~\ref{fig:accuracy_mnist} compares the dense grid subsampling method to
random Fourier features across a range of feature counts.
The deterministic feature map with subsampling performs better than the random
Fourier feature map across most large feature counts, although its performance
degrades for very small feature counts.
The deterministic feature map is also somewhat faster to compute, taking---for the
$28800$-features---320 seconds vs.\ 384 seconds for the random Fourier
features, a savings of $17\%$.

\paragraph{Speech recognition on TIMIT}
% \label{subsec:timit}

This task requires producing accurate transcripts from raw audio recordings of
conversations in English, involving 630 speakers, for a total of 5.4 hours of
speech.
We use the kernel features in the acoustic modeling step of speech recognition.
Each data point corresponds to a frame (10ms) of audio data, preprocessed using
the standard feature space Maximum Likelihood Linear Regression (fMMLR)
\cite{gales1998maximum}.
The input $x$ has dimension 40.
After generating kernel features $z(x)$ from this input, we model the
corresponding phonemes $y$ by a multinomial logistic regression model.
Again, we use a sparse ANOVA kernel, which is a sum of 50 sub-kernels of the
form $\exp(-\gamma \norm{x_S - y_S}^2)$, each acting on a subset $S$ of 5 indices.
These subsets are randomly chosen a priori.
To reweight the quadrature features, we sample 500 data points out of 1 million.

We plot the phone error rates (PER) of a speech recognizer trained based on
different feature generation schemes against the number of features $D$ in
Figure~\ref{fig:per_timit} (mean and standard deviation over 10 runs).
Again, subsampled dense grid performs similarly to random Fourier features, QMC
yields slightly higher error, while reweighted features achieve slightly lower
phone error rates.
All four methods have relatively high variability in their phone error rates due
to the stochastic nature of the training and decoding steps in the speech
recognition pipeline.
The quadrature-based features (subsampled dense grids and reweighted quadrature)
are about twice as fast to generate, compared to random Fourier features, due to
the small number of multiplies required.
We use the same setup as \citet{may2017kernel}, and the performance here matches
both that of random Fourier features and deep neural networks
in~\citet{may2017kernel}.

\begin{figure}[t]
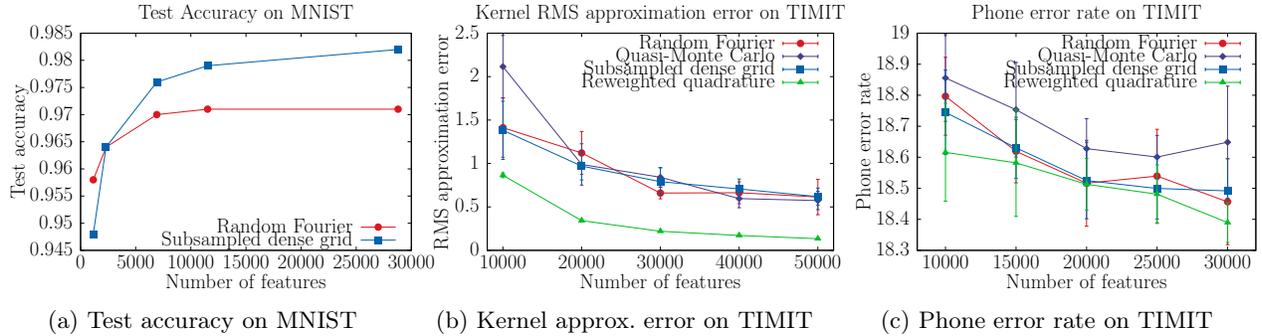

  \centering
  \begin{subfigure}{0.32\textwidth}
    \centering
    \resizebox{!}{0.75\columnwidth}{
    \LARGE \input{plotmnist.tex}
    }
    \caption{Test accuracy on MNIST}
    \label{fig:accuracy_mnist}
  \end{subfigure}\hfill%
  \begin{subfigure}{0.32\textwidth}
    \centering
    \resizebox{!}{0.75\columnwidth}{
    \LARGE \input{plotrmserror.tex}
    }
    \caption{Kernel approx.\ error on TIMIT}
    \label{fig:rms_timit}
  \end{subfigure}\hfill%
  \begin{subfigure}{0.32\textwidth}
    \centering
    \resizebox{!}{0.75\columnwidth}{
    \LARGE \input{plotwertimit.tex}
    }
    \caption{Phone error rate on TIMIT}
    \label{fig:per_timit}
  \end{subfigure}\hfill%
  \caption{Performance of different feature generation schemes on MNIST and TIMIT.}
  \label{fig:performance_timit}
\end{figure}

\section{Conclusion}

We presented deterministic feature maps for kernel machines.
We showed that we can achieve better scaling in the desired accuracy $\epsilon$
compared to the state-of-the-art method, random Fourier features.
We described several ways to construct these feature maps, including
polynomially-exact quadrature, dense grid construction, sparse grid
construction, and reweighted grid construction.
Our results apply well to the case of sparse ANOVA kernels, achieving
significant improvements (in the dependency on the dimension $d$) over random
Fourier features.
Finally, we evaluated our results experimentally, and showed that ANOVA kernels
with deterministic feature maps can produce comparable accuracy to the
state-of-the-art methods based on random Fourier features on real datasets.

ANOVA kernels are an example of how structure can be used to define better
kernels.
Resembling the convolutional layers of convolutional neural networks, they
induce the necessary inductive bias in the learning process.
Given CNNs' recent success in other domains beside images, such as sentence
classification~\cite{kim2014convolutional} and machine
translation~\cite{gehring2017convolutional}, we hope that our work on
deterministic feature maps will enable kernel methods such as ANOVA kernels to
find new areas of application.

\subsubsection*{Acknowledgments}

This material is based on research sponsored by Defense Advanced Research
Projects Agency (DARPA) under agreement number FA8750-17-2-0095.
We gratefully acknowledge the support of the DARPA SIMPLEX program under No.\ N66001-15-C-4043,
DARPA FA8750-12-2-0335 and FA8750-13-2-0039,
DOE 108845,
National Institute of Health (NIH) U54EB020405,
the National Science Foundation (NSF) under award No.\ CCF-1563078,
the Office of Naval Research (ONR) under awards No.\ N000141210041 and No.\ N000141310129,
the Moore Foundation,
the Okawa Research Grant,
American Family Insurance,
Accenture,
Toshiba, and Intel.
This research was supported in part by affiliate members and other supporters of
the Stanford DAWN project: Intel, Microsoft, Teradata, and VMware.
The U.S.\ Government is authorized to reproduce and distribute reprints for
Governmental purposes notwithstanding any copyright notation thereon.
The views and conclusions contained herein are those of the authors and should
not be interpreted as necessarily representing the official policies or
endorsements, either expressed or implied, of DARPA or the U.S.\ Government.
Any opinions, findings, and conclusions or recommendations expressed in this
material are those of the authors and do not necessarily reflect the views of
DARPA, AFRL, NSF, NIH, ONR, or the U.S.\ government.

\clearpage
\bibliography{references}

\clearpage

\appendix

\section{Dense grid construction}
\label{sec:dense_grid_construction}

If we let $\int_{-\infty}^{\infty} \Lambda_i(\omega) f(\omega) \, d \omega \approx \sum_{l=1}^{L_i}
a_{i,l} f(\omega_{i,l})$ be the Gaussian quadrature rule for each integral, then we
can approximate $k$ with
\begin{align*}
  \tilde{k}(u)
  &= \prod_{i=1}^d \sum_{l=1}^{L_k} a_{i,l} \exp(j \omega_{i,l} e_i^\top u). % \\
  % &= % \sum_{ \mathbf{i} \in \prod_{i=1}^d \{1 \ldots L_i \} } % \left(%   \prod_{k=1}^d %
  % a_{k,i_k} % \right) % \exp\left(%   j u^\top \sum_{k=1}^d \omega_{k,i_k} e_k %
  % \right).
\end{align*}
If we define $a_{\mathbf{l}} = \prod_{i=1}^d a_{i,l_i}$ and $\omega_{\mathbf{l}} =
\sum_{i=1}^d \omega_{i,l_i} e_i$ then we are left with the tensor product quadrature
rule
\begin{equation}
  \label{eqnTensorProductRule}
  \tilde{k}(u) = \sum_{ \mathbf{l} \in \prod_{i=1}^d \{1 \ldots L_i \} } a_{\mathbf{l}}
  \exp\left(j \omega_{\mathbf{l}}^\top u
  \right)
\end{equation}
over $D = \prod L_i$ points --- we can simplify this to $L^d$ in the case where every
$L_i = L$.

\section{Sparse grid construction}
\label{sec:sparse_grid_construction}

Here, we briefly describe the sparse grid construction.
We start by letting let $G_i^L(u_i)$ be the approximation of $k_i(u_i)$ that results
from applying the one-dimensional Gaussian quadrature rule with $L$ points: for
the appropriate sample points and weights,
\[
  G_i^L(u_i) = \sum_{l=1}^L a_l \exp(j u_i \omega_l).
\]
One of the properties of Gaussian quadrature is that it is exact in the limit of
large $L$.
In particular, this limit means that we can decompose $k_i(u_i)$ as the infinite
sum
\begin{equation*}
  k_i(u_i) = G_i^1(u_i) + \sum_{m=1}^{\infty} \left(G_i^{2^m}(u_i) - G_i^{2^{m-1}}(u_i) \right) =
  \sum_{m=0}^{\infty} \Delta_{i,m}(u_i),
\end{equation*}
where $\Delta_{i,m}(u_i) = G_i^{2^m}(u_i) - G_i^{2^{m-1}}(u_i)$.
To represent $k(u)$, it suffices to use the product
\[
  k(u) = \sum_{\mathbf{m} \in \N^d} \prod_{i=1}^d \Delta_{i,m_i}(u_i) = \sum_{\mathbf{m} \in \N^d}
  \Delta_{\mathbf{m}}(u)
\]
where $\Delta_{\mathbf{m}}(u) = \prod_{i=1}^d \Delta_{i, m_i}(u_i)$.  We can think of these
$\Delta_{\mathbf{m}}$ forming a ``grid'' of terms in $\N^d$.  We plot this
grid for $d = 2$ in Figure \ref{figGridDiagram}.  The dense grid approximation
is equivalent to summing up a hypercube of these terms, which we illustrate
as a square in the figure.

\begin{figure}[h]%
\centering
\resizebox{.85\columnwidth}{!}{%
\begin{tikzpicture}[every node/.style={inner sep=0,outer sep=0}]
  \foreach \x in {0,...,5}
    \foreach \y in {0,...,3}
      \draw (\x, \y) node[draw,circle,fill=black,minimum size=0.2cm] {};
  \foreach \x in {0,...,5}
    \draw (\x, 4) node {$\vdots$};
  \foreach \y in {0,...,3}
    \draw (6, \y) node {$\cdots$};
  \draw (6, 4) node {$\iddots$};
  \draw[red,very thick,dashed] (-0.3,-0.3) -- (3.3, -0.3)
    -- (3.3, 3.3) -- (-0.3, 3.3) -- cycle;
  \draw[blue,very thick] (-0.2,-0.2) -- (3.5, -0.2)
     -- (-0.2, 3.5) -- cycle;
  \draw[red,very thick,dashed] (6.5,2.0) -- (7.5,2.0)
    node[black,right] {\hspace{5pt}dense grid};
  \draw[blue,very thick] (6.5,1.5) -- (7.5,1.5)
    node[black,right] {\hspace{5pt}sparse grid};
\end{tikzpicture}}
\caption{
Grid of approximation terms $\Delta_{\mathbf{m}}$.
}
\label{figGridDiagram}
\end{figure}
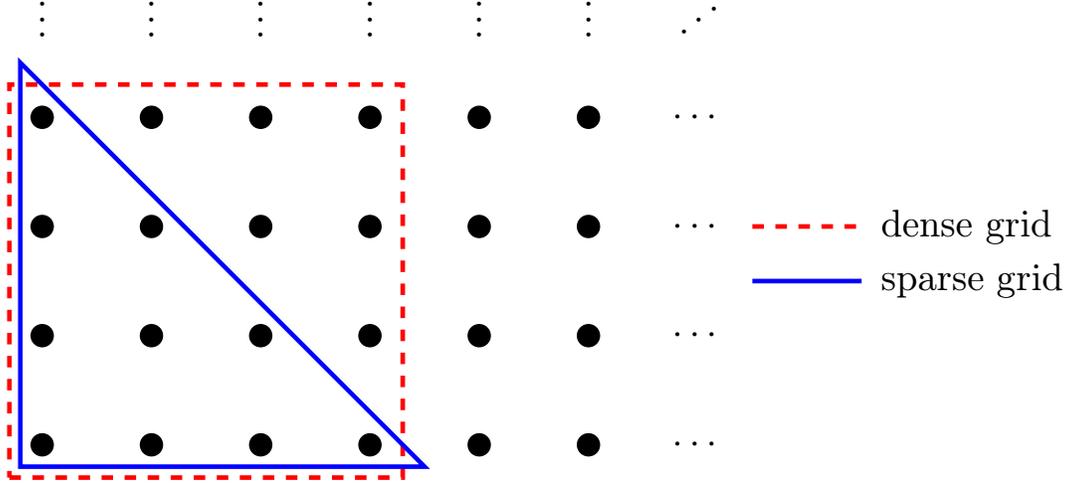

Smolyak's sparse grid approximation
approximates this sum by using only those $\Delta_{\mathbf{m}}$ that can
be computed with a ``small'' number of samples.  Specifically, the sparse
grid up to level $A$ is defined as,
\[
  \tilde{k}(u) = \sum_{\mathbf{m} \in \N^d, \, \mathbf{1}^\top \mathbf{m} \le A} \Delta_{\mathbf{m}}(u).
\]
In Figure \ref{figGridDiagram}, this is illustrated by the blue triangle ---
the efficiency of sparse grids comes from the fact that in higher dimensions,
the simplex of terms used by the sparse grid contains exponentially (in $d$)
fewer quadrature points than the hypercube of terms used by a dense grid.

Now, for any $u$, each $\Delta_{\mathbf{m}}(u)$ can be computed using the tensor
product quadrature rule from (\ref{eqnTensorProductRule}); the number of samples
required is no greater than $3^{\mathbf{1}^\top \mathbf{m}}$.
Combining this with the previous equation gives us a rough upper bound on the
sample count of the sparse grid construction
\[
  D \le \sum_{\mathbf{m} \in \N^d, \, \mathbf{1}^\top \mathbf{m} \le A} 3^{\mathbf{1}^\top
    \mathbf{m}} \le 3^A {d + A \choose A}.
\]

\section{Proofs}
\label{sec:proofs}

\subsection{Proof of Theorem~\ref{thmPolynomialQuadrature}}

In order to prove this theorem, we will need a couple of lemmas.

\begin{lemma}[Stirling's Approximation]
  \label{lemmaStirling}
  For any positive integer $n$,
  \[
    \left(n + \frac{1}{2} \right) \log n - n + \frac{1}{2} \log (2 \pi) + \frac{1}{12n+1}
    \le
    \log n!
    \le
    \left(n + \frac{1}{2} \right) \log n - n + \frac{1}{2} \log (2 \pi) + \frac{1}{12n}.
  \]
\end{lemma}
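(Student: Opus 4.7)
The plan is to prove this sharp form of Stirling's approximation (due to Robbins) by studying the sequence $s_n = \log n! - (n + \tfrac{1}{2})\log n + n$ and showing that $s_n - \tfrac{1}{12n}$ is monotonically increasing while $s_n - \tfrac{1}{12n+1}$ is monotonically decreasing, with both converging to the same limit $\tfrac{1}{2}\log(2\pi)$.

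First I would compute the telescoping difference
\[
  s_n - s_{n+1} = (n + \tfrac{1}{2})\log\tfrac{n+1}{n} - 1,
\]
and expand it using the identity $\log\tfrac{n+1}{n} = 2\,\operatorname{arctanh}\tfrac{1}{2n+1}$ together with the Taylor series $\operatorname{arctanh}(x) = \sum_{k \geq 0} \tfrac{x^{2k+1}}{2k+1}$. The ``$-1$'' absorbs the $k=0$ term, leaving the clean power series
\[
  s_n - s_{n+1} = \sum_{k=1}^{\infty} \frac{1}{(2k+1)(2n+1)^{2k}},
\]
which is strictly positive, so $s_n$ is itself decreasing.

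The next step is to sandwich this difference between two telescoping quantities. The upper bound $s_n - s_{n+1} \leq \tfrac{1}{12n} - \tfrac{1}{12(n+1)}$ follows from a term-by-term comparison with a geometric series of ratio $(2n+1)^{-2}$:
\[
  s_n - s_{n+1} \leq \frac{1}{3}\sum_{k=1}^{\infty}\frac{1}{(2n+1)^{2k}} = \frac{1}{3\bigl((2n+1)^2 - 1\bigr)} = \frac{1}{12n(n+1)}.
\]
The matching lower bound $s_n - s_{n+1} \geq \tfrac{1}{12n+1} - \tfrac{1}{12(n+1)+1}$ is the main obstacle; unlike the upper bound it cannot be obtained by a simple term-by-term comparison, and I would handle it either by direct algebraic manipulation (clearing denominators by $(12n+1)(12n+13)$ and verifying the inequality against the power series coefficient-by-coefficient) or by a convexity argument allowing one to telescope $x \mapsto 1/(12x+1)$ against the series.

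Once both bounds are in hand, summing the telescoping inequalities from $n$ to $\infty$ and letting $C = \lim_m s_m$ yields
\[
  \frac{1}{12n+1} \leq s_n - C \leq \frac{1}{12n}.
\]
It only remains to identify $C = \tfrac{1}{2}\log(2\pi)$, which I would do via Wallis's product: substituting the asymptotic $n! \sim e^{C}\, n^{n+1/2}\, e^{-n}$ into $\prod_{k \geq 1}\tfrac{(2k)^2}{(2k-1)(2k+1)} = \tfrac{\pi}{2}$ forces $e^{C} = \sqrt{2\pi}$. Rearranging the sandwich then gives exactly the stated inequalities for $\log n!$.
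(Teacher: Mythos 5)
Your proposal is correct, but note that the paper does not actually prove this lemma: it is stated without proof as a standard result (it is Robbins' 1955 refinement of Stirling's formula), so there is no in-paper argument to compare against. Your outline is exactly the classical proof: setting $s_n = \log n! - (n+\tfrac12)\log n + n$, the identity $s_n - s_{n+1} = (n+\tfrac12)\log\tfrac{n+1}{n} - 1 = \sum_{k\ge 1} \tfrac{1}{(2k+1)(2n+1)^{2k}}$ is right, the geometric-series upper bound telescopes against $\tfrac{1}{12n}$, and the Wallis-product identification of the constant is standard. The one place where you hedge --- the lower bound $s_n - s_{n+1} \ge \tfrac{1}{12n+1} - \tfrac{1}{12n+13}$ --- is actually easier than you fear: no coefficient-by-coefficient comparison is needed, since keeping only the $k=1$ term gives $s_n - s_{n+1} > \tfrac{1}{3(2n+1)^2}$, and clearing denominators in $\tfrac{1}{3(2n+1)^2} \ge \tfrac{12}{(12n+1)(12n+13)}$ reduces to $(12n+1)(12n+13) - 36(2n+1)^2 = 24n - 23 > 0$, which holds for all $n \ge 1$. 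With that observation the argument closes completely: $s_n - \tfrac{1}{12n}$ increases and $s_n - \tfrac{1}{12n+1}$ decreases to the common limit $C = \tfrac12\log(2\pi)$, yielding both stated inequalities.
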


\begin{lemma}[Subgaussian Moment Bound]
  \label{lemmaSubgaussianMomentBound}
  If a random variable $X$ is $b$-subgaussian, then its $p$-th moment is bounded
  by
  \[
    \E{\norm{X}^p}
    \le
    p 2^{\frac{p}{2}} b^p \Gamma \left( \frac{p}{2} \right).
  \]
\end{lemma}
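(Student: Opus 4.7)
The plan is to reduce the moment bound to the standard subgaussian tail estimate and then apply the layer-cake (tail integration) formula, which expresses $\E{|X|^p}$ as an integral of $\Pr(|X| > r)$ against $p\, r^{p-1}$.

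First I would use a Chernoff argument. Applying the subgaussian MGF bound in a fixed unit direction (the scalar case being the direct reading of the definition) together with Markov's inequality gives $\Pr(X \ge r) \le \exp(-s r + \tfrac{1}{2} b^2 s^2)$ for every $s > 0$. Minimizing over $s$ at $s = r/b^2$ produces the one-sided tail $\Pr(X \ge r) \le \exp(-r^2/(2b^2))$, and the symmetric argument applied to $-X$ yields the two-sided bound $\Pr(|X| \ge r) \le 2\exp(-r^2/(2b^2))$.

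Next I would substitute this into the layer-cake identity:
\[
\E{|X|^p} = \int_0^\infty p\, r^{p-1}\, \Pr(|X| \ge r)\, dr \le 2p \int_0^\infty r^{p-1}\, \exp\bigl(-\tfrac{r^2}{2 b^2}\bigr)\, dr.
\]
A change of variables $u = r^2/(2 b^2)$ converts the right-hand side to a Gamma integral: since $r = b\sqrt{2u}$ and $dr = b/\sqrt{2u}\, du$, one has $r^{p-1}\, dr = b^p\, 2^{p/2 - 1}\, u^{p/2 - 1}\, du$, so the integral equals $b^p\, 2^{p/2-1}\, \Gamma(p/2)$. Multiplying by the prefactor $2p$ gives the claimed bound $p\, 2^{p/2}\, b^p\, \Gamma(p/2)$.

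The main obstacle is not really technical but one of bookkeeping: tracking the powers of $2$ and $b$ through the substitution so that they collapse cleanly into the Gamma expression. A secondary subtlety is how to read $\|X\|$: the argument above is the scalar version, and the vector case follows by applying it to $\langle e, X\rangle$ for a fixed unit vector $e$, which is how the lemma will be invoked when controlling moments of $\omega^\top u$ inside the Taylor expansion underlying Theorem~\ref{thmPolynomialQuadrature}.
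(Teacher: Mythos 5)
Your scalar argument is correct and is the standard one: the Chernoff optimization gives $\Pr(|X|\ge r)\le 2\exp(-r^2/(2b^2))$, and the substitution $u=r^2/(2b^2)$ in the layer-cake integral produces exactly $p\,2^{p/2}b^p\Gamma(p/2)$ after multiplying by the prefactor $2p$, with the bookkeeping just as you describe. The paper states this lemma without proof, so for the one-dimensional case your write-up is a complete and correct justification.

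The genuine gap is the final step, the passage from the scalar case to $\norm{X}$ for a vector $X\in\R^d$. Applying the scalar bound to $\langle e,X\rangle$ for a \emph{fixed} unit vector $e$ controls the marginal moment $\E{\abs{\langle e,X\rangle}^p}$, not $\E{\norm{X}^p}$, since $\norm{X}=\sup_{\norm{e}=1}\langle e,X\rangle$; passing from marginals to the norm costs dimension-dependent factors (a covering of the sphere, or writing $\norm{X}^2=\sum_i X_i^2$ and applying Minkowski, which gives roughly $d^{p/2}$). In fact no dimension-free reduction of the kind you sketch can work: for $X\sim N(0,I_d)$ the subgaussian parameter is $b=1$ and $\E{\norm{X}^2}=d$, while the claimed bound at $p=2$ is $4$, so the vector form of the lemma fails outright for $d\ge 5$. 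This matters because the paper's proof of Theorem~\ref{thmPolynomialQuadrature} invokes the lemma on $\E{\norm{\omega}^R}$ for $\omega\in\R^d$ after a Cauchy--Schwarz step. Your closing remark actually identifies the correct repair: skip Cauchy--Schwarz and bound $\E{\abs{x^\top\omega}^R}\le\norm{x}^R\,\E{\abs{\langle x/\norm{x},\omega\rangle}^R}$ directly, where the scalar lemma applies and the same final estimate follows --- but that is a patch to the theorem's proof, not a proof of the lemma as stated, and your proposal should either restrict the lemma to the scalar case or accept a dimension-dependent constant.
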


We now prove the theorem.

\begin{proof}[Proof of Theorem \ref{thmPolynomialQuadrature}]

For any $x$, define $\epsilon(x)$, the error function, as
\begin{equation*}
  \epsilon(x) = \abs{k(x) - \sum_{i=1}^D a_i \exp(j x^\top \omega_i)}.
\end{equation*}
By Taylor's theorem, there exists a function $\beta(z)$ such that
\begin{equation*}
  \exp(j z) = \sum_{k=0}^{R-1} \frac{(j z)^k}{k!} + \frac{(j z)^R}{R!} \exp(j
  \beta(z)).
\end{equation*}
This is the mean value theorem form for the Taylor series remainder.
Therefore we can write $\epsilon(x)$ as
{
\small
\begin{align*}
  &\hspace{-2em}\abs{k(x) - \sum_{i=1}^D a_i \exp(j x^\top \omega_i)} \\
  =& \abs{\int \Lambda(\omega) \exp(j x^\top \omega) d \omega - \sum_{i=1}^D a_i \exp(j x^\top \omega_i)} \\
  =& \abs{\int \Lambda(\omega) \left(\sum_{l=0}^{R-1} \frac{(j x^\top \omega)^l}{l!} + \frac{(j x^\top
    \omega)^R}{R!} e^{j \beta(x^\top \omega)} \right) d \omega - \sum_{i=1}^D a_i \left(\sum_{l=0}^{R-1}
    \frac{(j x^\top \omega_i)^l}{l!} + \frac{(j x^\top \omega_i)^R}{R!} e^{j \beta(x^\top \omega_i)}
    \right)} \\
  =& \abs{\sum_{l=0}^{R-1} \frac{j^l}{l!} \left( \int \Lambda(\omega) (x^\top \omega)^l d \omega - \sum_{i=1}^D a_i
     (x^\top \omega_i)^l \right) + \frac{j^R}{R!} \left( \int \Lambda(\omega) (x^\top \omega)^R e^{j \beta(x^\top \omega)}
    d \omega - \sum_{i=1}^D a_i (x^\top \omega_i)^R e^{j \beta(x^\top \omega_i)} \right)}.
\end{align*}
}
Now, since our quadrature is exact up to degree $R$, by the condition from
(\ref{eqnPolyExactConstraints}), the first term is zero:
\begin{align*}
  \epsilon(x)
  &= \abs{\frac{j^R}{R!} \left( \int \Lambda(\omega) (x^\top \omega)^R \exp(j \beta(x^\top \omega)) d \omega - \sum_{i=1}^D
      a_i (x^\top \omega_i)^R \exp(j \beta(x^\top \omega_i)) \right) } \\
  &\le \frac{1}{R!} \left(\int \abs{\Lambda(\omega) (x^\top \omega)^R \exp(j \beta(x^\top \omega))} d \omega + \sum_{i=1}^D
    \abs{a_i (x^\top \omega_i)^R \exp(j \beta(x^\top \omega_i))} \right) \\
  &\le \frac{1}{R!} \left(\int \abs{\Lambda(\omega) (x^\top \omega)^R} d \omega + \sum_{i=1}^D \abs{a_i (x^\top
      \omega_i)^R} \right).
\end{align*}
Since $R$ is even, $a_i \ge 0$, and $\Lambda(\omega) \ge 0$,
\begin{equation*}
  \epsilon(x) \le \frac{1}{R!} \left(\int \Lambda(\omega) (x^\top \omega)^R d \omega + \sum_{i=1}^D a_i (x^\top \omega_i)^R
  \right).
\end{equation*}
Again applying our condition from (\ref{eqnPolyExactConstraints}),
\begin{equation*}
  \epsilon(x) \le \frac{2}{R!} \int \Lambda(\omega) (x^\top \omega)^R d \omega.
\end{equation*}
Finally, by Cauchy-Schwarz,
\begin{equation*}
  \epsilon(x) \le \frac{2 \norm{x}^R}{R!} \int \Lambda(\omega) \norm{\omega}^R d \omega \le \frac{2 \norm{x}^R}{R!}
  \mathbf{E}_{\Lambda}\left[ \norm{\omega}^R \right].
\end{equation*}
Now, since we assumed that $\Lambda$ was $b$-subgaussian, we can apply Lemma
\ref{lemmaSubgaussianMomentBound} to bound this expected value with
\begin{equation*}
  \epsilon(x) \le \frac{2 \norm{x}^R}{R!} \int \Lambda(\omega) \norm{\omega}^R d \omega \le \frac{2 \norm{x}^R}{R!}
  R 2^{\frac{R}{2}} b^R \Gamma \left( \frac{R}{2} \right) = 4 b^R \norm{x}^R
  \frac{2^{R/2} (R/2)!}{R!}.
\end{equation*}
Now we need to bound $\frac{2^{R/2} (R/2)!}{R!}$ using Stirling's
approximation (Lemma~\ref{lemmaStirling}):
\begin{align*}
  &-\log (R!) + \frac{R}{2} \log 2 + \log \left( (R/2)! \right) \\
  \le& - \left(\left(R + \frac{1}{2} \right) \log R - R + \frac{1}{2} \log (2 \pi) +
     \frac{1}{12R + 1}
     \right) + \frac{R}{2} \log 2 + \left( \frac{R}{2} + \frac{1}{2} \right)
     \log \left(\frac{R}{2} \right) - \frac{R}{2} + \frac{1}{2} \log (2 \pi) + \frac{1}{6R} \\
  =& - R \log R - \frac{1}{2} \log R + \frac{R}{2}  - \frac{1}{12R + 1} +
     \frac{R}{2} \log 2 + \frac{R}{2} \log R + \frac{1}{2} \log R - \frac{R}{2}
     \log 2 - \frac{1}{2} \log 2 + \frac{1}{6R} \\
  =& \ \frac{R}{2} - \frac{1}{2} \log 2 - \frac{R}{2} \log R + \frac{1}{6R} -
     \frac{1}{12R + 1} \\
  \leq& \ -\frac{1}{2} \log 2 + \frac{1}{12} - \frac{1}{25} + \frac{R}{2}
     \left( 1 - \log R \right),
\end{align*}
where we have used the fact that $R \geq 2$ since $R$ is even.
Taking the exponential results in
\begin{equation*}
  \epsilon(x) \le 4 b^R \norm{x}^R \frac{e^{1/12-1/25}}{\sqrt{2}} \left( \frac{e}{R} \right)^{R/2} \leq
  3 \left(\frac{e b^2 \norm{x}^2}{R} \right)^{\frac{R}{2}}.
\end{equation*}
Therefore, for any $x, y \in \mathcal{M}$,
\begin{align*}
  \abs{k(x - y) - \tilde{k}(x - y)}
  &= \abs{k(x - y) - \sum_{i=1}^D a_i \exp(j (x - y)^\top \omega_i)} \\
  &= \epsilon(x - y) \\
  &\le 3 \left(\frac{e b^2 \norm{x - y}^2}{R}
    \right)^{\frac{R}{2}}.
\end{align*}
Finally, since $\mathcal{M}$ has diameter $M$, we know that
$\norm{x - y} \le M$, so we can conclude that
\begin{equation*}
  \abs{k(x, y) - \tilde{k}(x - y)} \le 3 \left(\frac{e b^2 M^2}{R}
  \right)^{\frac{R}{2}},
\end{equation*}
which is the desired expression.

\end{proof}

Using this, we can directly prove Corollary~\ref{corPolynomialQuadrature}.
\begin{proof}[Proof of Corollary~\ref{corPolynomialQuadrature}]
  By assumption, the number of samples required is
  \[
    D \le \beta { d + R \choose d } \le \beta 2^d \cdot 2^R \le \beta 2^d \exp(R).
  \]

  In order to ensure
  \[
    \sup_{\norm{u} \leq M} \abs{k(u) - \tilde{k}(u)}
    \le
    \epsilon,
  \]
  it suffices by the result of Theorem~\ref{thmPolynomialQuadrature} to have $R$
  large enough that
  \[
    3 \left( \frac{e b^2 M^2}{R} \right)^{R/2} \le \epsilon \qquad \text{and} \qquad \frac{e b^2 M^2}{R} < 1.
  \]
  Suppose that we set $R$ such that
  \[
    \frac{e b^2 M^2}{R} \le \exp(-2 \gamma).
  \]
  If $\gamma > 0$, then the second condition will be trivially satisfied.
  The first condition will also be satisfied when we set $R$ large enough that
  \[
    3\exp(-\gamma R) \le \epsilon.
  \]
  This occurs when
  \[
    \exp(R) \ge \left( \frac{3}{\epsilon} \right)^{1/\gamma}.
  \]
  For this condition to hold, as $D \leq \beta 2^d \exp(R)$, it suffices to have
  \[
    D \geq \beta 2^d \left( \frac{3}{\epsilon} \right)^{1/\gamma}
  \]
  samples.  On the other hand, for $R$ to satisfy our original condition,
  we also need
  \[
    R \ge e b^2 M^2 \exp(2 \gamma).
  \]
  This can be achieved when
  \[
    D \geq \beta 2^d \exp(e^{2\gamma+1} b^2 M^2).
  \]
  Combining these two conditions using a maximum proves the corollary.
\end{proof}

We can similarly prove Corollary~\ref{corDenseGridQuadrature}.
\begin{proof}[Proof of Corollary~\ref{corDenseGridQuadrature}]

  For the quadrature rule to be exact for polynomials of degrees up to (even)
  $R$, it suffices for each one-dimensional rule to have $L = R/2 + 1$ points.
  The total number of points used is then $D = L^d = \exp (d \log (R/2 + 1))$.
  Since $R \geq 2$ as it is even, $\log (R/2 + 1) \leq R/2$, so $D \leq \exp(Rd/2)$.

  As in the proof of Corollary~\ref{corPolynomialQuadrature}, to ensure
  $\sup_{\norm{u} \leq M} \abs{k(u) - \tilde{k}(u)} \le \epsilon$, it suffices by the result
  of Theorem~\ref{thmPolynomialQuadrature} to have $R$ large enough that
  \[
    3 \left( \frac{e b^2 M^2}{R} \right)^{R/2} \le \epsilon
    \qquad \text{and } \qquad
    \frac{e b^2 M^2}{R} < 1.
  \]
  Suppose that we set $R$ such that
  \[
    \frac{e b^2 M^2}{R} \le \exp(-d\gamma).
  \]
  If $\gamma > 0$, then the second condition will be trivially satisfied.
  The first condition will also be satisfied when we set $R$ large enough that
  \[
    3\exp(-\gamma Rd/2) \le \epsilon.
  \]
  This occurs when
  \[
    \exp(Rd/2) \ge \left( \frac{3}{\epsilon} \right)^{1/\gamma}.
  \]
  For this condition to hold, as $D \leq \exp(Rd/2)$, it suffices to have
  \[
    D \geq \left( \frac{3}{\epsilon} \right)^{1/\gamma}
  \]
  samples.  On the other hand, for $R$ to satisfy our original condition,
  we also need
  \[
    R \ge e b^2 M^2 \exp(d\gamma).
  \]
  This can be achieved when
  \[
    D
    \geq
    \exp(de^{d\gamma} e b^2 M^2/2).
  \]
  Combining these two conditions using a maximum proves the corollary.
\end{proof}
\subsection{Proof of Theorem~\ref{thmSparseGridQuadrature}}

\begin{proof}[Proof of Theorem \ref{thmSparseGridQuadrature}]
  Based on the construction of the sparse grid in
  Section~\ref{sec:sparse_grid_construction}, $k$ and $\tilde{k}$ differ in the
  terms $\sum_{\mathbf{m} \in \mathbb{N}^d, \1^\top \mathbf{m} > A} \Delta_{\mathbf{m}} (u)$.
  Thus we need to bound the error
  \begin{equation*}
    \sup_{\norm{u} \leq M} \abs{k(u) - \tilde{k}(u)} = \sup_{\norm{u} \leq M}
    \abs{\sum_{\mathbf{m} \in \mathbb{N}^d, \1^\top \mathbf{m} > A} \Delta_{\mathbf{m}} (u)}
    \leq \sum_{\mathbf{m} \in \mathbb{N}^d, \1^\top \mathbf{m} > A} \sup_{\norm{u} \leq M} \abs{\Delta_{\mathbf{m}} (u)}.
  \end{equation*}
  But $\Delta_{\mathbf{m}}(u)$ is just a product of one-dimensional rules, and we can
  apply Theorem~\ref{thmPolynomialQuadrature} for each dimension.
  Indeed, the Gaussian quadrature rule with $L$ points $G_i^L$ is exact for
  polynomials of degree up to $2L - 1$, so the bound from
  Theorem~\ref{thmPolynomialQuadrature} with $R = 2(L - 1)$ becomes $3 \left(
    \frac{e b^2 u_i^2}{2(L - 1)} \right)^{L - 1} \leq 3 \left( \frac{e b^2
      u_i^2}{L} \right)^{L - 1}$ (since $2(L - 1) \geq L$).
  As $\Delta_{i, m_i} (u_i) = G_i^{2^m_i}(u_i) - G_i^{2^{m_i-1}}(u_i)$, we have
  \begin{align*}
    \abs{\Delta_{i, m_i}(u_i)}
    &= \abs{G_i^{2^m_i}(u_i) - G_i^{2^{m_i - 1}}(u_i)} \\
    &\leq \abs{G_i^{2^m_i}(u_i) - k_i(u_i)} + \abs{k_i(u_i) - G_i^{2^{m_i - 1}}(u_i)} \\
    &\leq 3 \left(eb^2u_i^2\right)^{2^{m_i} - 1} 2^{-m_i (2^{m_i} - 1)} + 3
      \left(eb^2u_i^2\right)^{2^{m_i - 1} - 1} 2^{-(m_i-1) (2^{m_i-1} - 1)} \\
    &\leq 6 \left(eb^2u_i^2\right)^{2^{m_i} - 1} 2^{-(m_i-1) (2^{m_i-1} - 1)} \\
    &= 6 (\sqrt{e} b u_i)^{2^{m_i}} 2^{-(m_i - 1)2^{m_i -1}} 2^{m_i-1}.
  \end{align*}
  If we let $c_i = 2^{m_i - 1}$ (and $c_i = 0$ if $m_i = 0$), then we can
  rewrite this as
  \begin{equation*}
    \abs{\Delta_{i, m_i}(u_i)}
    \le
    \frac{6 (\sqrt{e} b)^{2 c_i} c_i}{c_i^{c_i}} u_i^{2 c_i}.
  \end{equation*}
  Thus
  \begin{equation*}
    \abs{\Delta_{\mathbf{m}}(u)}
    \le
    \prod_{i \in \{ 1 \ldots d \}, m_i > 0}
    \frac{6 (\sqrt{e} b)^{2 c_i} c_i}{c_i^{c_i}} u_i^{2 c_i}.
  \end{equation*}
  As $6c_i \leq 6^{c_i}$, we have $\abs{\Delta_{\mathbf{m}}(u)} \le \prod_{i \in \{ 1 \ldots d \},
    m_i > 0} \frac{(\sqrt{6e} b)^{2 c_i}}{c_i^{c_i}} u_i^{2 c_i}$.
  Next, applying Lemma \ref{lemmaZBound} gives
  \begin{align*}
    \abs{\Delta_{\mathbf{m}}(u)}
    &\le
      \left(
      \prod_{i \in \{ 1 \ldots d \}, m_i > 0}
      \frac{(\sqrt{6e} b)^{2 c_i} }{c_i^{c_i}}
      \right)
      \left(
      M^{2\norm{c}_1}
      \norm{c}_1^{-\norm{c}_1}
      \prod_{i \in \{ 1 \ldots d \}, m_i > 0} c_i^{c_i}
      \right) \\
    &=
      (\sqrt{6e} b)^{2 \norm{c}_1}
      M^{2 \norm{c}_1}
      \norm{c}_1^{-\norm{c}_1} \\
    &=
      (6e b^2 M^2)^{\norm{c}_1}
      \norm{c}_1^{-\norm{c}_1}.
  \end{align*}
  Since $\norm{c}_1 \ge \norm{m}_1 \ge A$, we can bound the error term with
  \begin{align*}
    \sup_{\norm{u} \leq M} \abs{k(u) - \tilde{k}(u)}
    &\leq \sum_{\mathbf{m} \in
            \mathbb{N}^d, \1^\top \mathbf{m} > A} \sup_{\norm{u} \leq M} \abs{\Delta_{\mathbf{m}}
            (u)} \\
    &\leq
    \sum_{\mathbf{m}: \norm{\mathbf{m}}_1 > A}
    (6e b^2 M^2)^{\norm{c}_1}
    \norm{c}_1^{-\norm{c}_1} \\
    &\le
    \sum_{\mathbf{m}: \norm{\mathbf{m}}_1 > A}
    \left( \frac{6e b^2 M^2}{A} \right)^{\norm{c}_1}.
  \end{align*}
  Now, since by assumption $A \ge 24e b^2 M^2$ and $\norm{c}_1 \ge \norm{m}_1$, it
  follows that $6e b^2 M^2 / A \le 1$ and so we can upper-bound this sum with
  \begin{align*}
    \sup_{\norm{u} \leq M} \abs{k(u) - \tilde{k}(u)}
    &\le
    \sum_{\mathbf{m}: \norm{\mathbf{m}}_1 > A}
    \left( \frac{6e b^2 M^2}{A} \right)^{\norm{\mathbf{m}}_1} \\
    &=
    \sum_{l = A + 1}^{\infty}
    \sum_{\mathbf{m}: \norm{\mathbf{m}}_1 = l}
    \left( \frac{6e b^2 M^2}{A} \right)^l \\
    &=
    \sum_{l = A + 1}^{\infty}
    {d + l - 1 \choose l}
    \left( \frac{6e b^2 M^2}{A} \right)^l \\
    &\le
    \sum_{l = A + 1}^{\infty}
    2^{d + l - 1}
    \left( \frac{6e b^2 M^2}{A} \right)^l \\
    &=
    2^{d-1}
    \sum_{l = A + 1}^{\infty}
    \left( \frac{12e b^2 M^2}{A} \right)^l.
  \end{align*}
  Summing this geometric series results in
  \begin{align*}
    \sup_{\norm{u} \leq M} \abs{k(u) - \tilde{k}(u)}
    &\le
    2^{d-1}
    \left( \frac{12e b^2 M^2}{A} \right)^{A + 1}
    \left( 1 - \frac{12e b^2 M^2}{A} \right)^{-1} \\
    &\le
    2^{d-1}
    \left( \frac{12e b^2 M^2}{A} \right)^A
    \left( 1 - \frac{1}{2} \right)^{-1} \\
    &=
    2^d
    \left( \frac{12e b^2 M^2}{A} \right)^A.
  \end{align*}
  This is what we wanted to show.
\end{proof}

Using this, we can directly prove Corollary~\ref{corSparseGridQuadrature}.
\begin{proof}[Proof of Corollary~\ref{corSparseGridQuadrature}]
  Recall that the number of samples required for a sparse grid rule up to order $A$ is
  \[
    D \le 3^A { d + A \choose A } \le 2^d \cdot 6^A \le 2^d \exp(2 A).
  \]

  In order to ensure
  \[
    \sup_{\norm{u} \leq M} \abs{k(u) - \tilde{k}(u)}
    \le
    \epsilon,
  \]
  it suffices by the result of Theorem~\ref{thmSparseGridQuadrature} to have $A$
  large enough that
  \[
    2^d
    \left( \frac{12e b^2 M^2}{A} \right)^A
    \le
    \epsilon
  \]
  and
  \[
    \frac{12e b^2 M^2}{A} < 1.
  \]
  Suppose that we set $A$ such that
  \[
    \frac{12e b^2 M^2}{A} \le \exp(-2 \gamma).
  \]
  If $\gamma > 0$, then the second condition will be trivially satisfied.
  The first condition will also be satisfied when we set $A$ large enough that
  \[
    2^d
    \exp(-2 \gamma A)
    \le
    \epsilon.
  \]
  This occurs when
  \[
    \exp(2 A)
    \ge
    2^{d/\gamma}
    \cdot
    \epsilon^{-1/\gamma}.
  \]
  For this condition to hold, as $D \leq 2^d \exp(2A)$, it suffices to have
  \[
    D
    \geq
    2^d
    \cdot
    2^{d/\gamma}
    \cdot
    \epsilon^{-1/\gamma}
  \]
  samples.  On the other hand, for $A$ to satisfy our original condition,
  we also need
  \[
    A \ge 12e b^2 M^2 \exp(2 \gamma).
  \]
  This can be achieved when
  \[
    D
    \geq
    2^d \exp(24e b^2 M^2 \exp(2 \gamma)).
  \]
  Combining these two conditions using a maximum proves the corollary.
\end{proof}

We now prove the technical lemma that we have used.
\begin{lemma}
  \label{lemmaZBound}
  For any $u \in \R^d$ that satisfies $\norm{u} \le M$, and any $c \in \R^d$
  with $c_i > 0$,
  \[
    \prod_{i=1}^d u_i^{2 c_i}
    \le
    M^{2 \norm{c}_1}
    \norm{c}_1^{-\norm{c}_1}
    \prod_{i=1}^d c_i^{c_i}.
  \]
\end{lemma}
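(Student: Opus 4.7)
My plan is to reduce this inequality to a single application of Jensen's inequality for the concave logarithm. First, I read $u_i^{2c_i}$ as $(u_i^2)^{c_i}$, which is well-defined and non-negative for the real exponent $c_i > 0$; if any $u_i = 0$ then the left-hand side vanishes and the bound is immediate, so I may assume $u_i \neq 0$ for all $i$ in what follows.

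Taking logarithms, the desired inequality is equivalent to
\[
  \sum_{i=1}^d c_i \log(u_i^2) \le 2 \norm{c}_1 \log M - \norm{c}_1 \log \norm{c}_1 + \sum_{i=1}^d c_i \log c_i.
\]
Dividing through by $\norm{c}_1$ and setting $p_i = c_i / \norm{c}_1$, which together form a probability distribution on $\{1, \dots, d\}$, this rearranges to
\[
  \sum_{i=1}^d p_i \log\!\left( \frac{u_i^2}{c_i} \right) \le \log\!\left( \frac{M^2}{\norm{c}_1} \right).
\]

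Now I apply Jensen's inequality to the concave function $\log$: the left-hand side is at most $\log\!\bigl( \sum_i p_i \cdot u_i^2 / c_i \bigr) = \log\!\bigl( \norm{u}^2 / \norm{c}_1 \bigr)$, and the hypothesis $\norm{u} \le M$ immediately gives $\log(\norm{u}^2 / \norm{c}_1) \le \log(M^2 / \norm{c}_1)$. Exponentiating recovers the claim.

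There is no genuinely hard step here; the only thing to spot is the normalization that converts the weights $c_i$ into the probability vector $p_i$ so that Jensen applies cleanly, and this is also the point at which the factor $\norm{c}_1^{-\norm{c}_1}$ on the right-hand side of the lemma is generated. An essentially equivalent route is weighted AM--GM applied to the values $u_i^2 / c_i$ with weights $p_i$, which collapses the argument to a single line and sidesteps manipulating logarithms altogether.
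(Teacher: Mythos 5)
Your proof is correct, and it reaches the same extremal configuration as the paper by a different route. The paper bounds $\prod_i u_i^{2c_i}$ by the maximum of $\prod_i x_i^{c_i}$ over the simplex $\sum_i x_i = M^2$ (with $x_i = u_i^2$) and locates that maximum by Lagrange multipliers, obtaining $x_i = c_i M^2 / \norm{c}_1$ and substituting back. You instead normalize the weights to $p_i = c_i/\norm{c}_1$ and invoke Jensen's inequality for $\log$ (equivalently weighted AM--GM) applied to $u_i^2/c_i$, which produces the bound in one step. The two arguments encode the same fact --- the Lagrange critical point is exactly the equality case of your Jensen step --- but yours is tighter as a piece of writing: it needs no second-order verification that the critical point is a global maximum (a point the paper's proof leaves implicit), it handles the degenerate cases $u_i = 0$ explicitly, and it avoids the paper's slightly awkward reuse of the symbol $u$ for both the vector and the multiplier. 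The paper's version, on the other hand, makes the optimizing choice of $x$ explicit, which some readers may find more transparent. Either proof is acceptable.
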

\begin{proof}
  We produce this result by optimizing over $u_i$.  First, we let $x_i = u_i^2$,
  and note that an upper bound is
  \[
    \prod u_i^{2 c_i}
    \le
    \max_{\sum x_i = M^2} \prod_{i=1}^d x_i^{c_i}.
  \]
  Taking the logarithm and using the method of Lagrange multipliers to handle
  the constraint, we get Lagrangian
  \[
    J(x, u)
    =
    \sum_{i=1}^d c_i \log(x_i) + u \left( M^2 - \sum_{i=1}^d x_i \right).
  \]
  Differentiating to minimize gets us, for all $i$,
  \[
    0
    =
    \frac{c_i}{x_i} - u.
  \]
  which results in
  \[
    x_i = \frac{c_i}{u}.
  \]
  In order to satisfy the constraint, we must set $u$ such that
  \[
    x_i
    =
    \frac{c_i M^2}{\sum_{j=1}^d c_i}
    =
    \frac{c_i M^2}{\norm{c}_1}.
  \]
  With this assignment, we have
  \[
    \prod u_i^{2 c_i}
    \le
    \prod_{i=1}^d \left( \frac{c_i M^2}{\norm{c}_1} \right)^{c_i},
  \]
  and simplification produces the desired result.
\end{proof}

\subsection{Proof of Theorem~\ref{thmSparseANOVAKernel}}

\begin{proof}
  We first bound the approximation error of each sub-kernel acting on a subset
  $S$ of indices.
  Let $x_S, y_S$ be the vector $x$, $y$ restricted to these indices and let
  $k_S(x_S - y_S)$ be the sub-kernel acting on indices in $S$.
  As shown at the end of the proof of Theorem \ref{thmPolynomialQuadrature},
  \begin{equation*}
    \abs{k_S(x_S - y_S) - \tilde{k}_S(x_S - y_S)}
    \le 3 \left(\frac{e b^2 \norm{x_S - y_S}^2}{R} \right)^{\frac{R}{2}}
    = 3 \left(\frac{e b^2}{R} \right)^{\frac{R}{2}} \norm{x_S - y_S}^R.
  \end{equation*}
  As $\mathcal{M}$ has diameter $M$, $\norm{x_S - y_S} \leq M$.
  Noting that $R \geq 2$ since it is even, we can bound $\norm{x_S - y_S}^R \leq
  \frac{\norm{x_S - y_S}^2}{M^2} M^R$, and so
  \begin{equation*}
    \abs{k_S(x_S - y_S) - \tilde{k}_S(x_S - y_S)}
    \le 3 \frac{\norm{x_S - y_S}^2}{M^2} \left(\frac{e b^2 M^2}{R} \right)^{\frac{R}{2}}.
  \end{equation*}
  Summing over all $m$ sub-kernels, noting that each index only appears in at
  most $\Delta$ sets $S$, we have that $\sum_{S \in \mathcal{S}} \norm{x_S - y_S}^2 \leq
  \Delta \norm{x - y}^2 \leq \Delta M^2$.
  Therefore
  \begin{equation*}
    \epsilon = \abs{k(x - y) - \tilde{k}(x - y)} \leq 3 \Delta \left( \frac{e b^2 M^2}{R} \right)^{R/2}.
  \end{equation*}

  The number of points we will use in total is $D \leq m \beta {r + R \choose r}$.
  By a similar argument as in Corollary~\ref{corPolynomialQuadrature}, for any
  $\gamma > 0$, we obtain
  \begin{equation*}
    D(\epsilon) \leq \beta m 2^r \max \left( \exp(e^{2\gamma+1} b^2 M^2), (3\Delta)^{1/\gamma} \epsilon^{-1/\gamma} \right).
  \end{equation*}
\end{proof}

\section{Details of experiments}

For the MNIST dataset, we use a 60k/10k split of train and test set.
We use linear SVM on top of the features generated by random Fourier features or
subsampled dense grid.
The kernel bandwidth and the SVM hyper-parameters are chosen by cross
validation.

For the task of acoustic modeling on the TIMIT dataset, the input features
correspond to a frame of 10ms of speech, preprocessed using the standard feature
pace Maximum Likelihood Linear Regression (fMMLR).
The input dimension is 40.
We use multinomial logistic regression on top of the features generated by
random Fourier features, QMC, subsampled dense grid, or reweighted dense grid.
The output of the multinomial logistic regressions is a probability distribution
over 1917 groups of tri-phonemes.
We constrain the weight matrix of the logistic regression to have rank at most
500, similar to \cite{may2017kernel}.
The kernel bandwidth is chosen by performance on the validation set.

\end{document}